\newcommand{\Arxiv}[2]{#2}
\newcommand{\authorrunning}[1]{}
\newcommand{\institute}[1]{}
\newcommand{\orcidID}[1]{${}^{\textnormal{\href{https://orcid.org/#1}{[#1]}}}$}
\newcommand{\keywords}[1]{\\\;\\\textbf{Keywords: } \begingroup\newcommand{\and}{$\cdot$ }#1\endgroup}
\theoremstyle{definition}
\newtheorem{definition}{Definition}
\theoremstyle{plain}
\newtheorem{lemma}{Lemma}
\newtheorem{proposition}{Proposition}
\newtheorem{theorem}{Theorem}
\newtheorem{corollary}{Corollary}
\theoremstyle{remark}
\newcommand{\R}{\mathbb{R}}
\newcommand{\N}{\mathbb{N}}
\newcommand{\X}{\mathcal{X}}
\newcommand{\T}{\mathcal{T}}
\newcommand{\D}{\mathcal{D}}
\newcommand{\PT}{P_T}
\newcommand{\WDS}{\textnormal{WDS}}
\newcommand{\DP}{\textnormal{DP}_{\textnormal{rph}}}
\newcommand{\WDSfh}{\WDS_\textnormal{fh}}
\newcommand{\holist}{H}
\renewcommand{\d}{\,\mathrm{d}}
\newcommand{\Null}{\mathcal{N}}
\newcommand{\W}{\mathcal{W}}
\newcommand{\proj}{\textnormal{proj}}
\newcommand{\1}{\mathbf{1}}
\newcommand{\Rm}{\mathbf{R}}
\newcommand{\Dm}{\mathbf{D}}
\newcommand{\Pm}{\mathbf{P}}
\newcommand{\Wm}{\mathbf{W}}
\newcommand{\diag}{\textnormal{diag}}
\newcommand\blfootnote[1]{%
  \begingroup
  \renewcommand\thefootnote{}\footnote{#1}%
  \addtocounter{footnote}{-1}%
  \endgroup
}
\begin{document}
\title{An Algorithm-Centered Approach \\ To Model Streaming Data\thanks{Funding in the scope of the BMBF project KI Akademie OWL under grant agreement No 01IS24057A and 
MKW NRW project SAIL under grant agreement No NW21-059B is gratefully acknowledged.}\Arxiv{}{\:\:\footnote{This manuscript is currently under review at the Symposium on Intelligent Data Analysis (IDA 2025).}}}

\author{Fabian Hinder\orcidID{0000-0002-1199-4085}${}^{,\ddagger}$\and 
Valerie Vaquet\orcidID{0000-0001-7659-857X}\and 
David Komnick\and
Barbara Hammer\orcidID{0000-0002-0935-5591}%
\Arxiv{}{ \\\;\\
Bielefeld University, Bielefeld, Germany \\
\texttt{\{fhinder,vvaquet,bhammer\}@techfak.uni-bielefeld.de}}
}
\authorrunning{F. Hinder et al.}
\institute{Bielefeld University, Bielefeld, Germany \\
\email{\{fhinder,vvaquet,bhammer\}@techfak.uni-bielefeld.de}}
\maketitle              
\blfootnote{\!\!\!\!${}^\ddagger$ Corresponding Author}
\begin{abstract}
Besides the classical offline setup of machine learning, stream learning constitutes a well-established setup where data arrives over time in potentially non-stationary environments. Concept drift, the phenomenon that the underlying distribution changes over time poses a significant challenge. Yet, despite high practical relevance, there is little to no foundational theory for learning in the drifting setup comparable to classical statistical learning theory in the offline setting. This can be attributed to the lack of an underlying object comparable to a probability distribution as in the classical setup. While there exist approaches to transfer ideas to the streaming setup, these start from a data perspective rather than an algorithmic one. In this work, we suggest a new model of data over time that is aimed at the algorithm's perspective. Instead of defining the setup using time points, we utilize a window-based approach that resembles the inner workings of most stream learning algorithms. We compare our framework to others from the literature on a theoretical basis, showing that in many cases both model the same situation. Furthermore, we perform a numerical evaluation and showcase an application in the domain of critical infrastructure. 
\keywords{Streaming Data \and Concept Drift \and Stream Learning \and Theoretical Foundations of Stream Learning \and Theory of Concept Drift.}
\end{abstract}

\section{Introduction}
\label{sec:intro}
Most works on machine learning consider the batch setup where data is drawn i.i.d. from a single distribution. However, in many real-world applications, data is generated over time and possibly subject to various changes which are commonly referred to as \emph{concept drift}~\cite{oneortwo,lu2018,gama} or drift for shorthand. Drift can arise from a variety of sources as, for instance, seasonal patterns, shifting user demands, sensor aging, and environmental factors.

These frequently unforeseen distributional changes can yield substantial drops in the performance of machine learning models that often require retraining or even the redesign of entire pipelines. Thus, a considerable body of work focuses on the \emph{stream learning} setup~\cite{lu2018,gama} and designs models that are adaptable to distributional changes in the data arriving as an infinite sequence known as a \emph{data stream}. A common strategy in this field is to use sliding window-based approaches: By updating the model using only the most recent samples while discarding older ones, the model adapts to the current distribution~\cite{gama}.

In contrast to practical approaches, theory in the stream learning setup is still limited. As standard frameworks, for example, statistical learning theory~\cite{shalev2014understanding}, impose the assumption that all considered data is drawn i.i.d. from a single distribution, drift complicates theoretical analysis~\cite{ida2023}. Besides, the established way of modeling data over time and defining drift are not suitable for analyzing stream machine learning on a theoretical level~\cite{dawidd,oneortwo,ida2023}. Common definitions focus on a time-point-wise perspective~\cite{dawidd,gama}, assigning a distribution per time point rather than over a period. Yet, as described before, most algorithms employ time-window-based approaches, looking at data over time spans~\cite{gama}. This raises questions about the validity of point-wise drift definitions for practical considerations.

To address these challenges, we propose a new framework to model data over time. It is centered around the idea of time windows rather than time points. Thus, our framework aligns directly with the algorithmic perspective and provides a more practical basis for defining drift. We compare our approach to traditional time point definition and investigate the mathematical relationships between the two. Interestingly, despite their distinct starting points, both frameworks capture similar information. Thus, our analysis not only offers deeper theoretical insights but also justifies previously used approaches. In particular, it supports the \emph{time-as-a-feature} paradigm~\cite{dawidd} that has led to the development of several recent algorithms~\cite{oneortwo}.

This paper is structured as follows: First (\cref{sec:preliminary}) we recall the basics of stream learning and concept drift. In the second part (\cref{sec:main}) we introduce our new formal framework (\cref{sec:WDS}), and compare it to the existing definitions of drift theoretically (\cref{sec:comparison}) and algorithmically (\cref{sec:algo}). Afterward, we perform two experiments (\cref{sec:Exp1,sec:Exp2}), showcasing both the found connection between the window-based and time-point-based setup as well as a practical relevant use case of our ideas in the domain of critical infrastructure, before we conclude the paper (\cref{sec:conclusion}).

\section{Preliminary Notes: Stream Learning and Concept Drift}
\label{sec:preliminary}

In the classical batch setup of machine learning one considers a dataset $S = (X_1,\dots,X_n)$ with $X_i \sim \D$ are i.i.d.~\cite{shalev2014understanding}. In \emph{stream learning} this setup is extended to an infinite sequence of observations $S = (X_1,X_2,\dots)$ with the single data points $X_i$ arriving over time. To deal with this, most stream learning algorithms are based on \emph{sliding windows} which typically contain the last $n$ data points, i.e., $S_{i-n:i} = (X_{i-n},\dots,X_i)$~\cite{lu2018}. In this case, data analysis and model training are performed based on $S_{i-n:i}$ replacing the dataset in the batch setup. In particular, $S_{i-n:i}$ is updated once the next data point $X_{i+1}$ arrives~\cite{gama,lu2018}.

As the data is collected over longer periods of time, the underlying distribution might change -- a phenomenon known as \emph{concept drift}~\cite{gama,lu2018,oneortwo}. There are different approaches to model concept drift. 
The most common way is to allow each data point to follow a (potentially different) distribution $X_i \sim \D_i$. Drift is then defined as a change of the data-point-wise distribution, i.e., $\exists i,j : \D_i \neq \D_j$~\cite{gama,lu2018}. Thus, finite streams without drift correspond to the classical i.i.d. setup. 

However, as pointed out by \cite{dawidd,ida2023} this setup is not well suited to formulating learning theory results due to its lack of an underlying reference object like a distribution in classical statistics. To solve this issue \cite{ida2023,dawidd} suggested explicitly considering time by introducing a new random variable $T_i$ marking the arrival time of $X_i$. In this fully statistical setup, the data point's distribution is determined by the arrival time, i.e., $X_i \mid T_i = t \sim \D_t$. Drift is defined as the probability of observing two different distributions being larger than zero. This leads to:

\begin{definition}[Distribution Process, Time Window, Holistic Distribution~\cite{oneortwo}]
\label{def:DP}
Let $(\X,\Sigma_\X)$ and $(\T,\Sigma_\T)$ be measure spaces. A \emph{(regular post hoc) distribution process} $(\D_t,\PT)$ on $\X$ over $\T$ is a Markov kernel $\D_t$ from $\T$ to $\X$, i.e., for all $t \in \T$ $\D_t$ is a distribution on $\X$ such that $t \mapsto \D_t(A)$ is measurable for all $A \in \Sigma_\X$, and $\PT$ a distribution on $\T$.

A distribution process has \emph{drift} if the chance of observing two different distributions is larger than 0, i.e., $\exists A \in \Sigma_\X : \PT^2(\{(s,t) \in \T \mid \D_t(A) \neq \D_s(A)\}) > 0$.

We call a set $W \in \Sigma_\T$ with $\PT(W) > 0$ a \emph{time window}. We refer to the (uniquely determined) distribution on $\X \times \T$ such that $\D(A \times W) = \int_W \D_t(A) \d \PT$ as the \emph{holistic distribution} and to the conditional distribution $\D_W(A) = \linebreak\D(A \times W \mid \X \times W)$ as the \emph{window mean distribution} for a time window $W$.

We denote the set of all (regular post hoc) distribution processes as $\DP(\X,\T)$ and the map taking a distribution process to its holistic distributing as\linebreak $\holist : \DP(\X,\T) \to \Pr(\X \times \T)$.
\end{definition}

Due to the fully statistical nature of this setup, it is possible to draw new data points from a distribution process.
Furthermore, as pointed out by \cite{oneortwo,ida2023}, a crucial advantage of this model is that it allows us to derive a window-wise distribution in a formally sound way. Since windows play a fundamental role in stream learning algorithms~\cite{lu2018}, an approach corresponding to windows is better suited to transfer theoretical ideas like statistical learning theory to the streaming setup~\cite{ida2023}.

In this case one considers the time-window-based samples, i.e., for a stream $S = ((X_1,T_1),(X_2,T_2),\dots)$ and $W \in \Sigma_\T$ we have the \emph{window sample}
\begin{align*}
    S_W = (X_i)_{i : T_i \in W}.
\end{align*}
As the data distribution depends on time, taking the limiting distribution relates to taking the sampling frequency to infinite~\cite{oneortwo} so that for increasing frequencies $n$ the size of $S_W^{(n)}$ extracted from the stream $S^{(n)}$ tends to infinity. 

If $S$ was initially obtained as an i.i.d. sample of the holistic distribution $\D$ then $S_W$ is an i.i.d. sample of $\D_W$~\cite{dawidd}. However, in practice, different ways of obtaining samples can be used. This raises the question of whether an algorithm-centered point of view should be favored over a time-point-centered one. 

In the following, we will provide a new way to model data over time that primarily targets the distribution on windows and thus a more algorithmic point of view. We will then compare the resulting formal model to the time-point-wise point of view discussed above.

\section{A Stream Learning Inspired Framework}
\label{sec:main}

As discussed in the last section, basically all algorithmic approaches to stream learning are based on the idea of sliding windows. On the other hand, most formal models used to describe data over time consider the data for each time point separately. As pointed out by \cite{dawidd} transitioning from the time-point-wise point of view to the window-based point of view is not only crucial for understanding stream learning but also poses a challenge as it is in general not clear how to compute these window distributions in the different formal models. 

To bridge this gap we will now introduce a new formal model that, in contrast to most formal models used to describe stream setups, is not based on the time-point-based point of view but the more algorithmic window-based point of view.

\subsection{A Formal Model of Window-based Distributions}
\label{sec:WDS}

To construct a window-based formal model for stream learning we need to specify two things: 1)~the windows suited for analysis, and 2)~the distributions thereon. For simplicity, we will assume that a time window is a set of time points which is not necessarily an interval. The main modeling thus focuses on the question of which sets of time points are reasonable windows and how the distributions placed on those need to relate to each other. We will address the concept of windows in \cref{def:WS} and the concept of the distributions on those windows in \cref{def:WDS}. 

To model the \emph{window system (WS)} we will not focus on a specific algorithmic implementation but the more general question of which windows constitute a reasonable choice in the sense that they are ``large enough'' to admit estimates when we take the sampling frequency to infinity. For independent $X_i$ this is the case if $|S_W^{(n)}| \to \infty$ as $n \to \infty$.
We obtain the following definition, which to formally capture this idea also requires considering \emph{null windows} that are not suited for estimates:
\begin{definition}[Window System (WS)]
\label{def:WS}
Let $(\T,\Sigma_\T)$ be a measure space. A \emph{window system on $\T$} $(\W,\Null)$ is a set system $\W \subset\Sigma_\T$ of \emph{windows} and a $\sigma$-ideal $\Null \subset \Sigma_\T$ of \emph{null windows} such that the following hold
\begin{enumerate}
    \item $\W$ is closed under finite disjoint unions
    \item $\W$ and $\Null$ are disjoint and $\W \cup \Null$ is a semi-ring 
    \item $\W$ locally generates $\Sigma_\T$, i.e., $(\Sigma_\T)_{|W} \subset \sigma(\W \cup \Null)$ for all $W \in \W$
    \item $\W$ covers $\T$, i.e., for $S \in \Sigma_\T$ if $S \cap W \in \Null$ for all $W \in \W$ then $S \in \Null$.
\end{enumerate}

If there is a $W \in \W$ such that $\T \setminus W \in \Null$ we say that the window system has a \emph{finite horizon}.
\end{definition}

We shortly explain the axioms: the overarching idea is that $\W$ captures sets which allow a reasonable statistical estimate of the properties of the underlying process. 
Axiom 1 captures that if we have two samples which are large enough for that, then so is their union.
Axiom 2 models the fact that there are sets in which there are no observations which can be obtained e.g. by intersecting larger samples.
Axioms 3 and 4 assure that the windows encode just as much information as the $\sigma$-algebra.

The term \emph{finite horizon} relates to the idea that we can, in theory, keep the entire stream in memory. It is therefore a way to limit our considerations to streams of finite time spans.

Based on this concept define \emph{windowed distributions systems (WDS)} that capture the observed windows underlying probabilities and their consistency. For each window, we define the limiting distribution from which the sample $S_W$ is drawn. The concept of null windows is included by requiring that two windows that only differ by a null window have the same limiting distribution as they produce the essentially same samples, resulting in:

\begin{definition}[Windowed Distribution System (WDS), Constant WDS, Extension]
\label{def:WDS}
Let $(\W,\Null)$ be a window system and $(\X,\Sigma_\X)$ a measure space. A \emph{Windowed Distribution System (WDS)} $(\W,\Null,\D_W)$ is a collection of probability measures $(\D_W)_{W \in \W}$ on $\X$ such that
\begin{enumerate}
    \item for $W_1,W_2 \in \W$ with $W_1 \vartriangle W_2 \in \Null$ we have $\D_{W_1} = \D_{W_2}$, 
    \item if $W_1 \subset W_2 \subset \dots \nearrow W$ with $W,W_1,W_2,\dots \in \W$ then $\lim_{n \to \infty} \D_{W_n} = \D_{W}$,
    \item \label{def:WDS:comp} for pairwise disjoint windows $W_1,\dots,W_n \in \W$ there are numbers\linebreak $0 < \lambda_1,\dots,\lambda_n$ such that for all $I \subset \{1,\dots,n\}$ we have 
    \begin{align*}
    	\D_{W_I} = \frac{\sum_{i \in I} \lambda_i \D_{W_i}}{\sum_{i \in I} \lambda_i} \qquad \text{where} \qquad W_I := \bigcup_{i \in I} W_i.
    \end{align*}
\end{enumerate}

If for all $W_1,W_2 \in \W$ we have $\D_{W_1} = \D_{W_2}$ we say that the WDS is \emph{constant}.
A WDS $(\bar\W,\bar\Null,\bar\D_W)$ (on the same base spaces) is called an \emph{extension} of $(\W,\Null,\D_W)$ iff $\W \subset \bar\W$, $\Null \subset \bar\Null$, and $\D_W = \bar\D_W$ for all $W \in \W$.

We denote the set of all WDS as $\WDS(\X,\T)$ and those with finite horizons as $\WDSfh(\X,\T)$.
\end{definition}

The axioms capture that the null windows do not impact the probabilities (axiom 1), probabilities are compatible with sampling (axiom 2) and compatible with unions (axiom 3). 
Axioms 1 and 3 are motivated by finite samples: if two estimates are made based on the same samples, then the estimated value is the same (axiom 1), if we combine two samples then the combined estimate is a convex combination of the single estimates (axiom 3).

In the next section, we will show that the concept of WDSs is compatible with the time-point-wise model of a distribution process but more general. 

\subsection{Connection to the Time-Point-Wise Setup}
\label{sec:comparison}

\begin{figure}[t]
    \centering
    \resizebox{\textwidth}{!}{
    {
    \begin{tikzpicture}[
nd/.style={rectangle, minimum width=2cm, minimum height=1cm}
]
    \node[nd] (DP)                         {$\underset{\textnormal{r.p.h. Distribution} \atop \textnormal{Process (Def.~\ref{def:WDS} \cite{dawidd})}}{(\D_t,\PT)}$};
    \node[nd] (HD)    [right = of DP]      {$\underset{\textnormal{Holistic}\atop\textnormal{Distribution \cite{dawidd}}}{\D}$};
    \node[nd] (help1) [right = of HD]      {};
    \node[nd] (WDS)   [above = of help1]   {$\underset{\textnormal{WDS (Def.~\ref{def:WDS})}\atop\;}{\D_W}$};
    \node[nd] (WDSPT) [below = of help1]   {$\underset{\textnormal{WDS + Comp.}\atop\textnormal{Temp.Dist (Def.~\ref{def:comp_PT})}}{(\D_W,\PT)}$};
    
    \draw[->] (DP.north east) to [bend right=-45] node[midway,above]{Fubini} (HD.north west);
    \draw[<-] (DP.south east) to [bend right=45] (HD.south west);
    
    \draw[->] (HD.north east) --node[midway,above,sloped]{Prop.~\ref{prop:induced_WDS}} (WDS.south west);
    \draw[->] (WDS.south) --node[midway,above,sloped]{Prop.~\ref{prop:exist_PT}} (WDSPT.north);
    \draw[->] (WDSPT.north west) --node[midway,below,sloped]{Prop.~\ref{prop:premain}} (HD.south east);
    
    \draw[->, draw=gray] (WDSPT.west) -|node[above]{\hspace{4cm}\cref{thm:main}} (DP.south);
    \draw[->, draw=gray] (DP.north) |- (WDS.west);
    
    \end{tikzpicture}\hspace{2cm}}}
    \caption{Overview of considered theorems, setups, and stages. }
    \label{fig:overview}
\end{figure}
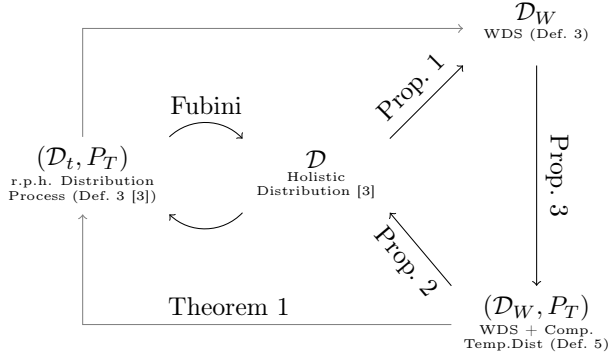

In the last section, we introduced WDSs as an alternative model to describe distributions in data streams. In contrast to distribution processes which are defined time-point-wise and consider the problem from a data modeling perspective, WDSs are based on the idea of windows which form the basis for many stream learning algorithms. 
In the following, we will consider the question of how these concepts are related, see \cref{fig:overview}. 

Given a distribution process we can construct a WDS using the window mean distributions, as already suggested in \cref{sec:preliminary}:

\begin{proposition}
\label{prop:induced_WDS}
By taking the windows, null windows, and window mean distributions of a holistic distribution we obtain a WDS on finite horizons, i.e., 
\begin{align*}
\begin{array}{rl}
i : \Pr(\X \times \T) &\to \WDSfh(\X,\T) \\
    \D &\mapsto (\W,\Null,\D_W) 
\end{array}
\quad\text{where}\quad
\begin{array}{rl}
\W &= \{S \in \Sigma_\T \mid \D(\X \times S) > 0\} \\
\Null &= \{S \in \Sigma_\T \mid \D(\X \times S) = 0\} \\
\D_W(A) &= \D(A \times W \mid \X \times W)
\end{array}
\end{align*}
is a well-defined map. 
It extends via holistic distributions to distribution processes
\begin{align*}
    I : \DP(\X,\T) &\to \WDSfh(\X,\T)
\end{align*}
Furthermore, $I(\D_t)$ is not constant if and only if $\D_t$ has drift.
\end{proposition}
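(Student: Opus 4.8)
The plan is to check that $(\W,\Null,\D_W)$ satisfies each defining property of a finite-horizon WDS in turn, and then to read the drift characterization off an averaging formula for $\D_W$. The map $I$ will simply be $i\circ\holist$, so the only real content is that $i$ lands in $\WDSfh(\X,\T)$ and that the last equivalence holds.

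\textbf{Well-definedness of the window system.} The key observation is that here $\W\cup\Null=\Sigma_\T$, since every $S\in\Sigma_\T$ has either $\D(\X\times S)>0$ or $\D(\X\times S)=0$. This collapses most of the axioms of \cref{def:WS}: Axiom~2 holds because a $\sigma$-algebra is a semi-ring and $\W,\Null$ are disjoint by construction; Axiom~3 is immediate from $\sigma(\W\cup\Null)=\Sigma_\T$; and Axiom~4 holds because any $S\notin\Null$ is itself a window, so taking $W=S$ already gives $S\cap W=S\notin\Null$. Axiom~1 (closure under finite disjoint unions) follows from finite additivity of the time marginal $S\mapsto\D(\X\times S)$. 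The finite horizon is witnessed by $W=\T$, which is a window because $\D(\X\times\T)=1>0$, while $\T\setminus\T=\emptyset\in\Null$.

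\textbf{The WDS axioms.} For the collection $\D_W(A)=\D(A\times W)/\D(\X\times W)$ I would verify \cref{def:WDS} directly. For Axiom~1, if $W_1\vartriangle W_2\in\Null$ then, writing $A\times W_j$ as the disjoint union of $A\times(W_1\cap W_2)$ and $A\times(W_j\setminus W_{3-j})$ and using $W_j\setminus W_{3-j}\subset W_1\vartriangle W_2\in\Null$, one gets $\D(A\times W_1)=\D(A\times(W_1\cap W_2))=\D(A\times W_2)$ for every $A$ (including $A=\X$), hence $\D_{W_1}=\D_{W_2}$. Axiom~2 follows from continuity from below of $\D$: $\D(A\times W_n)\to\D(A\times W)$ and $\D(\X\times W_n)\to\D(\X\times W)>0$, so the quotients converge setwise. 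For Axiom~3 I would set $\lambda_i=\D(\X\times W_i)>0$; disjointness and additivity then yield $\D(A\times W_I)=\sum_{i\in I}\lambda_i\D_{W_i}(A)$ and $\D(\X\times W_I)=\sum_{i\in I}\lambda_i$, which is exactly the required convex combination. The extension to $I=i\circ\holist$ is then pure composition with the holistic-distribution map of \cref{def:DP}.

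\textbf{The drift equivalence.} Here I would use the averaging identity $\D_W(A)=\frac{1}{\PT(W)}\int_W\D_t(A)\d\PT$, which follows from $\D(A\times W)=\int_W\D_t(A)\d\PT$ and $\D(\X\times W)=\PT(W)$, together with $\W=\{S:\PT(S)>0\}$. If $\D_t$ has no drift, then for each fixed $A$, Fubini applied to $\PT^2(\{(s,t):\D_t(A)\neq\D_s(A)\})=0$ shows that $t\mapsto\D_t(A)$ is $\PT$-essentially equal to a constant $c_A$; averaging gives $\D_W(A)=c_A$ for every window, so all $\D_W$ coincide and the WDS is constant. Conversely, if the WDS is constant then $\D_W=\D_\T$ for every window, and the averaging identity (with the null-window case trivial) gives $\int_S\D_t(A)\d\PT=\int_S c_A\d\PT$ for all $S\in\Sigma_\T$, forcing $\D_t(A)=c_A$ for $\PT$-a.e.\ $t$ and hence the absence of drift.

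The window-system bookkeeping is essentially automatic once one notes $\W\cup\Null=\Sigma_\T$, so I expect the main obstacle to be the measure-theoretic handling of the equivalence: the exceptional $\PT$-null set on which $t\mapsto\D_t(A)$ deviates from $c_A$ depends on $A$, so one must argue separately for each fixed $A$, use that equality of integrals over all measurable $S$ forces a.e.\ equality of integrands, and invoke Fubini to pass between the two-dimensional $\PT^2$ statement defining drift and the one-dimensional essential-constancy statement.
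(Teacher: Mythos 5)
Your proposal is correct, and for the structural part (verifying the window-system and WDS axioms) it follows essentially the paper's own route: the observation $\W\cup\Null=\Sigma_\T$, the choice $\lambda_i=\D(\X\times W_i)$ (the paper uses the normalized $\PT(W_i\mid W)$, which is the same thing), and $\T$ itself witnessing the finite horizon. Two points of difference are worth noting. First, a stylistic one: for WDS axioms 1 and 2 the paper establishes a single quantitative bound $|\D_{W_1}(A)-\D_{W_2}(A)|\leq 2\,\PT(W_1\vartriangle W_2)/\PT(W_1)$ and derives both axioms from it, whereas you handle axiom 1 by a direct decomposition and axiom 2 by continuity from below of $\D$; your version is arguably cleaner, the paper's gives an explicit modulus of continuity. (Do also record explicitly that $\Null$ is a $\sigma$-ideal --- it is a stated precondition of \cref{def:WS}, not one of the numbered axioms; it follows at once from monotonicity and countable subadditivity of $S\mapsto\D(\X\times S)$, and the paper checks it.) Second, a genuine difference: for the drift equivalence the paper does not argue from the $\PT^2$ definition at all; it invokes the characterization from \cite{dawidd} that $\D_t$ has no drift iff $\D=\D_\T\times\PT$, after which both directions are one-line computations (constancy gives the product form on the intersection-stable generator $\Sigma_\X\times\Sigma_\T$, and conversely). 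You instead prove the equivalence from scratch: Fubini applied to the null set $\{(s,t):\D_t(A)\neq\D_s(A)\}$ yields $\PT$-essential constancy of $t\mapsto\D_t(A)$ for each fixed $A$, and in the converse direction equality of $\int_S\D_t(A)\,\d\PT$ and $\int_S c_A\,\d\PT$ over all $S\in\Sigma_\T$ forces a.e.\ equality, from which $\PT^2$-nullity follows by covering the bad set with two product-null sets. Your argument buys self-containedness (no appeal to the external factorization lemma) at the cost of the measure-theoretic care you correctly flag ($A$-dependent exceptional sets, passing between the one- and two-dimensional statements); the paper's buys brevity by outsourcing exactly that work to the cited result.
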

\begin{proof}
\Arxiv{All proofs can be found in the ArXiv version \todo{cite}}{All proofs are given in the appendix}
\end{proof}

\cref{prop:induced_WDS} shows that every distribution process induces a WDS. This construction can be extended to non-probability measures showing that WDSs can model more general situations. This however is beyond the scope of this paper.

\begin{definition}
For a distribution process $\D_t$ on $\X$ over $\T$ or a probability distribution $\D$ on $\X \times \T$ we refer to $I(\D_t)$ and $i(\D)$, respectively, as the \emph{induced WDS}, where $i$ and $I$ are the maps from \cref{prop:induced_WDS}.
\end{definition}

In the following, we address the opposite direction considered in \cref{prop:induced_WDS}. As before we will do this constructively. We proceed in two steps: 1)~reconstruct the holistic distribution, given the WDS and $\PT$, and then 2)~reconstruct $\PT$ from the WDS.
To proceed with 1) we have the following:

\begin{proposition}
\label{prop:premain}
Given the time marginal $\PT(W) = \D(\X \times W)$ we can reconstruct the holistic distribution $\D$ from the induced WDS $i(\D)$, i.e., the map
\begin{align*}
    i \times \proj_{\Pr(\T)} : \Pr(\X \times \T) &\to \WDSfh(\X,\T) \times \Pr(\T) \\
    \D &\mapsto (i(\D),\; \PT)
\end{align*}
has a left inverse $h$. Furthermore, right concatenation results in extensions, i.e., for a WDS $\D_W$ with compatible $\PT$ we have that $i \circ h (\D_W,\PT)$ extends $\D_W$.
\end{proposition}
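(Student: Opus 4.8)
The plan is to define $h$ directly on rectangles and let the Carathéodory/uniqueness machinery do the rest, so that $h(i(\D),\PT)=\D$ becomes almost automatic. The guiding identity is that the window mean distributions satisfy $\D(A\times W)=\D_W(A)\,\PT(W)$ for every $A\in\Sigma_\X$ and every window $W$, which is just the defining relation $\D_W(A)=\D(A\times W)/\D(\X\times W)$ from \cref{prop:induced_WDS}. I would therefore build, for an \emph{arbitrary} WDS $(\W,\Null,\D_W)$ together with a compatible temporal distribution $\PT$ (\cref{def:comp_PT}), the unique probability measure $\mu=h(\D_W,\PT)$ on $\Sigma_\X\otimes\Sigma_\T$ with $\mu(A\times W)=\D_W(A)\,\PT(W)$ for $W\in\W$ and $\mu(A\times N)=0$ for $N\in\Null$.

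I would first dispose of uniqueness, which also yields the left-inverse claim cheaply. Since we work on a finite horizon there is $W_0\in\W$ with $\T\setminus W_0\in\Null$; combined with local generation (axiom 3 of \cref{def:WS}) this gives $\sigma(\W\cup\Null)=\Sigma_\T$, because any $S\in\Sigma_\T$ splits as $(S\cap W_0)\cup(S\setminus W_0)$ with the first part in $(\Sigma_\T)_{|W_0}\subset\sigma(\W\cup\Null)$ and the second in the $\sigma$-ideal $\Null$. Hence the rectangles $A\times W$ with $A\in\Sigma_\X$, $W\in\W\cup\Null$ form a $\pi$-system generating $\Sigma_\X\otimes\Sigma_\T$, and two probability measures agreeing on them coincide. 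Thus $\mu$ is unique if it exists, and since $h(i(\D),\PT)$ and $\D$ agree on every such rectangle by the identity above, the uniqueness theorem forces $h(i(\D),\PT)=\D$.

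The substance is existence of $\mu$ for an arbitrary compatible pair. I would proceed one ``column'' at a time: fix $A\in\Sigma_\X$ and set $\nu_A(W):=\D_W(A)\,\PT(W)$ for $W\in\W$ and $\nu_A(N):=0$ for $N\in\Null$. Finite additivity of $\nu_A$ comes from WDS axiom \ref{def:WDS:comp} once $\PT$ realises the mixture weights, i.e.\ $\D_{W_1\sqcup W_2}=(\PT(W_1)\D_{W_1}+\PT(W_2)\D_{W_2})/\PT(W_1\sqcup W_2)$, which is exactly what compatibility of $\PT$ encodes; countable additivity then follows from the continuity axiom (axiom 2 of \cref{def:WDS}), since for $W_n\nearrow W$ one has $\D_{W_n}(A)\to\D_W(A)$ and $\PT(W_n)\to\PT(W)$, whence $\nu_A(W_n)\to\nu_A(W)$. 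So $\nu_A$ is a pre-measure on the semi-ring $\W\cup\Null$, extends to a finite measure on $\Sigma_\T$ with $\nu_A\le\nu_\X=\PT$, and hence $\nu_A\ll\PT$ with density $f_A:=\d\nu_A/\d\PT\in[0,1]$; additivity in $A$ transfers to $\sum_j f_{A_j}=f_{\sqcup_j A_j}$ $\PT$-almost everywhere.

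The main obstacle, as I see it, is to assemble $\{f_A\}_{A\in\Sigma_\X}$ into a genuine Markov kernel $\D_t(A):=f_A(t)$, i.e.\ to choose versions that are simultaneously a probability measure in $A$ for $\PT$-almost every $t$; this is the familiar step from a separately additive bimeasure to a joint measure, and is where a mild regularity hypothesis on $(\X,\Sigma_\X)$ enters. I would handle it by restricting to a countable generating algebra $\mathcal E$ of $\Sigma_\X$: choosing versions that satisfy the (countably many) additivity and monotonicity relations on one $\PT$-conull set, the identity $\int_\T f_{E^{(k)}}\,\d\PT=\PT(W_0)\,\D_{W_0}(E^{(k)})\to 0$ for $E^{(k)}\searrow\emptyset$ in $\mathcal E$ forces $f_{E^{(k)}}\to 0$ pointwise $\PT$-a.e., so $E\mapsto f_E(t)$ is continuous at $\emptyset$ on $\mathcal E$ and extends to a probability measure $\D_t$ on $\Sigma_\X$ for a.e.\ $t$ (measurability of $t\mapsto\D_t(A)$ following by a Dynkin argument). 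Setting $\mu(E):=\int_\T\D_t(E_t)\,\d\PT$, the holistic distribution of $(\D_t,\PT)$ in the sense of \cref{def:DP}, gives $\mu(A\times W)=\int_W f_A\,\d\PT=\nu_A(W)=\D_W(A)\,\PT(W)$, as required. Finally, for the extension claim I would read off the induced WDS of $\mu$: its time marginal is $\mu(\X\times S)=\nu_\X(S)=\PT(S)$, so compatibility ($\PT>0$ on $\W$, $\PT=0$ on $\Null$) yields $\W\subset\W'$ and $\Null\subset\Null'$, while for $W\in\W$ the induced window mean is $\mu(A\times W)/\mu(\X\times W)=\D_W(A)$, so $i\circ h(\D_W,\PT)$ extends $(\W,\Null,\D_W)$ in the sense of \cref{def:WDS}.
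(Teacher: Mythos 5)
Your overall scaffolding matches the paper's: you define $h$ on the semi-ring of rectangles $A\times S$, $S\in\W\cup\Null$, get $\sigma(\W\cup\Null)=\Sigma_\T$ from the finite horizon exactly as in \cref{lem:extension}, and obtain uniqueness, the left-inverse property, and the extension claim by a Dynkin/$\pi$-system argument; this part is fine. Your column measures $\nu_A$ are also sound, although your justification of their $\sigma$-additivity is incomplete as stated: for a countable disjoint decomposition the partial unions only lie in the ring generated by $\W\cup\Null$, i.e.\ they split as $W_n'\sqcup N_n'$ with $W_n'\in\W$, $N_n'\in\Null$, and neither the $W_n'$ nor their limit need lie in $\W$, so axiom 2 of \cref{def:WDS} does not apply directly. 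This is repairable: on the generated ring one has $\nu_A\le\PT$, and a finite, finitely additive content dominated by a measure is continuous at $\emptyset$, hence a pre-measure.

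The genuine gap is the assembly step you yourself flag as the main obstacle. You build a pointwise Markov kernel $\D_t(A)=f_A(t)$ and define $h$ by integrating it, under the added hypothesis that $\Sigma_\X$ has a countable generating algebra. But \cref{prop:premain} is stated for arbitrary measure spaces, and, worse, such a kernel need not exist even when $\Sigma_\X$ is countably generated. Concretely (Dieudonn\'e's example): let $N\subset[0,1]$ have Lebesgue outer measure $1$ and inner measure $0$, let $\X=[0,1]$ with $\Sigma_\X=\sigma(\mathcal{B},N)$ (countably generated), $\T=[0,1]$ with the Borel sets, $\mu$ the extension of Lebesgue measure $\lambda$ to $\Sigma_\X$ with $\mu(N)=1$, and $\D$ the image of $\mu$ under $x\mapsto(x,x)$, so that $\D(C\times B)=\mu(C\cap B)$ and $\PT=\lambda$. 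This is a legitimate holistic distribution, so the proposition applies to $(i(\D),\PT)$; yet any kernel with $\int_B\D_t(C)\d\lambda=\mu(C\cap B)$ would satisfy $\D_t=\delta_t$ on $\mathcal{B}$ for a.e.\ $t$ (countable generation plus point separation), hence $\D_t(N)=\1[t\in N]$, and $\int\D_t(N)\d\lambda=\mu(N)=1$ would force $N$ to contain a Borel set of measure $1$, contradicting $\lambda_*(N)=0$. The precise point where your argument breaks is the claim that $E\mapsto f_E(t)$ is continuous at $\emptyset$ on $\mathcal E$ for a.e.\ $t$: each decreasing sequence $E^{(k)}\searrow\emptyset$ produces its own exceptional null set, and there are uncountably many such sequences in $\mathcal E$, so these null sets cannot be united. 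This is exactly the classical obstruction to the existence of regular conditional probabilities, and it is why the paper never constructs a kernel at this stage: its \cref{lem:elementary_fubini} builds the joint measure directly from the \emph{set-indexed} conditional system (the Ionescu--Tulcea step only ever conditions on atoms of finite $\sigma$-algebras, where conditional probabilities are elementary ratios), applies it locally to the countably many sets occurring in each $\sigma$-additivity check (\cref{lem:reconstruct_HD}), and finishes with Carath\'eodory; Markov kernels are recovered only in \cref{thm:main}, under the standard Borel assumption, as the paper warns just before that theorem. To repair your proof you must either restrict to standard Borel $\X$ (which weakens the proposition to essentially \cref{thm:main}) or replace the kernel-assembly step by a direct construction of the joint measure from the family $\{\nu_A\}$, which is the paper's route.
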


Here, we considered the case where we pair the induced WDS directly with the temporal distribution $\PT$. Yet, not every temporal distribution fits with every WDS as the weights $\lambda_i$ in condition~\ref{def:WDS:comp} of \cref{def:WDS} have to match the measure. This can be formulated as follows:

\begin{definition}[Compatible Time Distribution]
\label{def:comp_PT}
Let $(\W,\Null,\D_W)$ be a WDS. We say that a measure $\PT$ on $\T$ is \emph{compatible} with $\D_W$ iff the following hold
\begin{enumerate}
    \item for all $W \in \W$ we have $0 < \PT(W) < \infty$ 
    \item for all $N \in \Null$ we have $\PT(N) = 0$ 
    \item for all disjoint $W_1,\dots,W_n \in \W$ it holds
    \begin{align*}
        \D_W = \frac{\sum_{i = 1}^n \PT(W_i) \D_{W_i}}{\sum_{i = 1}^n \PT(W_i)}
    \end{align*}
    where $W = W_1 \cup \dots \cup W_n$.
\end{enumerate}
\end{definition}

The next step is to construct a compatible time distribution from the WDS. 
If the WDS is constant then it does not provide enough structure to construct the measure as the weights can be chosen arbitrarily. This not only leads to ambiguity regarding the choice of $\PT$ but it might even be the case that no such measure exists at all. 
In contrast, for non-constant WDS, the weights allow to construct $\PT$ from the WDS. As the weights only give a relative comparison we can only hope to reconstruct it up to scaling, but aside from that, it is unique.

\begin{proposition}
\label{prop:exist_PT}
For every non-constant WDS on finite horizons, there is a unique compatible probability time distribution, i.e., there is a map
\begin{align*}
    R : \left\{\D_W \in \WDSfh(\X,\T) \:\left|\: \D_W \text{ is not} \atop \text{constant}\right.\right\} \to \Pr(\T),
\end{align*}
such that $R(\D_W)$ is compatible with $\D_W$.
\end{proposition}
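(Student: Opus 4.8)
The plan is to construct $R(\D_W)=\PT$ explicitly relative to a reference window and to read off uniqueness from the fact that the compatibility condition pins down every mass ratio. Exploiting the finite horizon, fix $\Omega\in\W$ with $\T\setminus\Omega\in\Null$. By the first axiom of \cref{def:WDS} each window $W$ satisfies $\D_W=\D_{W\cap\Omega}$, so modulo null windows I may assume $W\subseteq\Omega$ and aim for the normalisation $\PT(\Omega)=1$. The guiding identity is that, for disjoint windows, \cref{def:WDS:comp} supplies weights $\lambda_i$ that work simultaneously for all subindex sets, while a compatible $\PT$ in the sense of \cref{def:comp_PT} is precisely a choice of such weights that is additive and consistent across all decompositions. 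In particular, whenever $\Omega\setminus W\in\W$ with $\D_W\neq\D_{\Omega\setminus W}$, writing $\D_\Omega=\alpha\,\D_W+(1-\alpha)\,\D_{\Omega\setminus W}$ determines $\alpha\in(0,1)$ uniquely, since distinct endpoints force a unique convex coefficient, and any compatible $\PT$ must then satisfy $\PT(W)=\alpha$.

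To build $\PT$ I would define a content on the semi-ring $\W\cup\Null$ of \cref{def:WS} by setting $\PT(N)=0$ on null windows and, for a window $W\subseteq\Omega$, $\PT(W):=\lambda$, where $\lambda$ is the weight attached to $W$ by \cref{def:WDS:comp} applied to a finite decomposition $\Omega=W\sqcup V_1\sqcup\dots\sqcup V_k$ into windows, normalised so that the weights of the whole family sum to $1$ (thus $\PT(\Omega)=1$). Finite additivity is then essentially forced: if $W=W_1\sqcup W_2$, refining the decomposition so that $W_1$ and $W_2$ appear as separate blocks and invoking the subset identities of \cref{def:WDS:comp} for $I=\{W_1\},\{W_2\},\{W_1,W_2\}$ yields $\PT(W)=\PT(W_1)+\PT(W_2)$, while positivity $0<\PT(W)$ comes directly from $\lambda_i>0$.

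The crux — and the step I expect to be hardest — is well-definedness: the weight assigned to $W$ must not depend on the chosen decomposition of $\Omega$. Here the non-constancy hypothesis is essential. In \cref{def:WDS:comp} the ratio of the weights of two blocks with \emph{distinct} window distributions is forced by the unique convex coefficient, whereas blocks sharing a distribution have a priori free relative weights. I would therefore argue that, given two decompositions, a common refinement (available because $\W\cup\Null$ is a semi-ring, hence closed under the relevant intersections and differences) contains a block $W'$ with $\D_{W'}\neq\D_W$; this permits a \emph{triangulation} fixing the ratios $\PT(W):\PT(W')$ and $\PT(W'):\PT(\text{rest})$, and hence $\PT(W)$ absolutely. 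Showing that such a distinguishing block can always be produced disjointly from $W$ — using non-constancy together with the covering and local-generation axioms of \cref{def:WS} — and that the triangulated value agrees across refinements is the technical heart of the argument; the degenerate subcase $\D_W=\D_{\Omega\setminus W}$ is resolved by exactly this device.

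It remains to upgrade the content to a measure and to close the uniqueness argument. Countable additivity I would obtain from the continuity axiom of \cref{def:WDS}: for $W_n\nearrow W$ the convergence $\D_{W_n}\to\D_W$ transfers, via the affine coordinate used above (test against a set $A$ with $\D_W(A)\neq\D_{W'}(A)$ for a distinguishing $W'$), to $\PT(W_n)\to\PT(W)$, giving continuity from below and hence $\sigma$-additivity on the semi-ring. A Carathéodory/Hahn--Kolmogorov extension then produces a measure on $\Sigma_\T$, where the required $\sigma$-algebra equality uses the local-generation and covering axioms of \cref{def:WS}; finiteness together with $\PT(\Omega)=1$ makes it a probability measure, and conditions 1--3 of \cref{def:comp_PT} hold by construction. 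Uniqueness follows because any other compatible $\PT'$ must assign the forced ratios from the triangulation step and satisfy $\PT'(\Omega)=1$, whence $\PT'=\PT$. Throughout, the main obstacle is controlling the blocks whose window distributions coincide; non-constancy is precisely what prevents the system of ratios from collapsing into indeterminacy, exactly as the statement's hypothesis anticipates.
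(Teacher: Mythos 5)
Your overall architecture is the same as the paper's: define a content on the semi-ring $\W\cup\Null$ whose values are the convex weights from condition 3 of \cref{def:WDS}, pin those weights down via pairs of windows with distinct distributions, and extend by Carathéodory. However, there is a genuine gap at exactly the step you call the technical heart: your claim that a distinguishing block can always be produced \emph{disjointly from} $W$ is false. Consider the induced WDS of the process on $\T=[0,1]$, $\X=\{0,1\}$, $\PT$ Lebesgue, with $\D_t=\delta_0$ on $[0,1/4]$, $\D_t=\delta_1$ on $(1/4,1/2]$, and $\D_t=\tfrac12\delta_0+\tfrac12\delta_1$ on $(1/2,1]$. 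Take $W=[0,1/2]$: every window essentially disjoint from $W$ has distribution $\tfrac12\delta_0+\tfrac12\delta_1=\D_W$, yet the WDS is non-constant. The covering and local-generation axioms of \cref{def:WS} cannot rescue the claim, since this example satisfies all of them. The correct device --- and what the paper does in \cref{lem:prop_exist:help1} --- is to triangulate at the level of a full finite partition of the horizon: pick \emph{any} two blocks $W_1,W_2$ of the partition with $\D_{W_1}\neq\D_{W_2}$ (in the example these are necessarily sub-blocks of $W$ itself, obtained by refining $W$); every other block differs from at least one of them, so all block weights are determined up to one global scale, and $\PT(W)$ is then the \emph{sum} of the weights of the blocks making up $W$. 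In other words, your degenerate case $\D_W=\D_{\Omega\setminus W}$ is resolved by splitting $W$, not by finding a distinguishing block outside it, and your uniqueness argument inherits the same defect.

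There is a second gap in the passage from content to measure. Your claim that for $W_n\nearrow W$ the convergence $\D_{W_n}\to\D_W$ ``transfers via the affine coordinate'' to $\PT(W_n)\to\PT(W)$ does not go through as stated: if $\lambda_n:=\PT(W_n)/\PT(W)\to\lambda_\infty<1$, then writing $\D_W=\lambda_n\D_{W_n}+(1-\lambda_n)\D_{W\setminus W_n}$ and letting $n\to\infty$ only yields $\D_{W\setminus W_n}(A)\to\D_W(A)$ for every $A$ --- no contradiction, because the tail distributions themselves converge to $\D_W$, so testing against any fixed distinguishing set $A$ is inconclusive. This is precisely why the paper needs an extra idea in \cref{lem:prop_exist:help2}: alter the disjoint sequence at finitely many places to get $V_i$ with $\D_{\cup_i V_i}\neq\D_{\cup_i W_i}$, observe that $\sigma$-additivity for one sequence is equivalent to that for the other, and show that a simultaneous failure forces the eventually-equal tails to give $\D_{\cup_i W_i}=\D_{\cup_i V_i}$, a contradiction. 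Without something playing the role of this alteration trick, your extension step is incomplete.
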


Combining \cref{prop:premain,prop:exist_PT} we obtain the following consequence:

\begin{corollary}
\label{cor:premain}
Holistic distributions with drift\footnote{As shown in \cite{dawidd} a distribution process has no drift if and only if $\D = \D_\T \times \PT$} model the same situations as non-constant WDSs on finite horizons, i.e., the map 
\begin{align*}
    i_{|\text{drift}} : \Pr(\X \times \T) \setminus \Pr(\X) \times \Pr(\T) &\to \left\{\D_W \in \WDSfh(\X,\T) \:\left|\: \D_W \text{ is not} \atop \text{constant}\right.\right\}
\end{align*}
has a left inverse $f$ and it holds $f = h \circ (\textnormal{id}_{\WDS(\X,\T)} \times R) \circ \Delta$ with $\Delta$ the diagonal, $h$ and $R$ are the maps from \cref{prop:premain,prop:exist_PT}.
\end{corollary}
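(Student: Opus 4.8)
The plan is to verify directly that the candidate map $f = h \circ (\textnormal{id}_{\WDS(\X,\T)} \times R) \circ \Delta$ is a left inverse of $i_{|\text{drift}}$. Unwinding the composition gives $f(\D_W) = h(\D_W, R(\D_W))$, so for a holistic distribution $\D$ with drift it suffices to show $h(i(\D), R(i(\D))) = \D$. The diagonal $\Delta$ is present only to duplicate the induced WDS, sending one copy through $R$ to produce a time distribution while keeping the other copy to be paired with it.

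First I would check that domains and codomains line up. By the footnote, $\Pr(\X \times \T) \setminus \Pr(\X) \times \Pr(\T)$ consists precisely of the holistic distributions with drift, and by the ``furthermore'' clause of \cref{prop:induced_WDS} such a $\D$ induces a non-constant WDS $i(\D)$ on a finite horizon. Hence $i_{|\text{drift}}$ is well-defined into the stated codomain, and $R$ may legitimately be applied to $i(\D)$.

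The one genuine step is to identify $R(i(\D))$ with the time marginal $\PT(W) = \D(\X \times W)$. I would prove this by showing that $\PT$ is compatible with $i(\D)$ in the sense of \cref{def:comp_PT} and then invoking uniqueness. Conditions 1 and 2 of compatibility are immediate from the definitions $\W = \{S : \D(\X \times S) > 0\}$ and $\Null = \{S : \D(\X \times S) = 0\}$ together with $\PT \le 1$; condition 3 follows from $\D_W(A) = \D(A \times W)/\D(\X \times W)$ and the additivity $\D(A \times W) = \sum_i \D(A \times W_i)$ over disjoint $W_i$. Since $\PT$ is a probability measure and $i(\D)$ is non-constant, the uniqueness asserted in \cref{prop:exist_PT} forces $R(i(\D)) = \PT$.

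It then remains to apply \cref{prop:premain}: as $h$ is a left inverse of $i \times \proj_{\Pr(\T)}$, we have $h(i(\D), \PT) = \D$ with $\PT$ the time marginal. Substituting $R(i(\D)) = \PT$ yields $f(i(\D)) = h(i(\D), \PT) = \D$, so $f \circ i_{|\text{drift}} = \textnormal{id}$ as claimed. I expect the main obstacle to be the compatibility verification feeding the uniqueness step; the substantive reconstructions themselves are already delivered by \cref{prop:premain,prop:exist_PT}, leaving the corollary essentially a matter of threading these maps together and confirming the types match.
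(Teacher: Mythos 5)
Your proposal is correct and takes essentially the same route as the paper's own (very terse) proof: both reduce the corollary to identifying $R(i(\D))$ with the time marginal $\PT(\cdot)=\D(\X\times\cdot)$, using that $\PT$ is compatible with $i(\D)$ together with the uniqueness in \cref{prop:exist_PT}, and then conclude via the left-inverse property of $h$ from \cref{prop:premain}. Your version merely spells out the compatibility check and the domain/codomain matching that the paper leaves implicit.
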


In \cref{prop:premain} and \cref{cor:premain} we considered probability distributions on $\X \times \T$ which we interpreted as the holistic distributions of some distribution process. Although the holistic distribution essentially encodes the entire practically relevant information of the distribution process~\cite{oneortwo} it is in general not possible to uniquely reconstruct the Markov kernel from the holistic distribution. However, assuming that data space and time domain are sufficiently simple this is possible. In this case, we can rephrase \cref{prop:premain} and \cref{cor:premain} in terms of distribution processes:

\begin{theorem}
\label{thm:main}
Assuming $\X$ and $\T$ are standard Borel spaces, e.g., $\N,[0,1],\R,\R^d$, then the results of \cref{prop:premain} and \cref{cor:premain} translate to distribution processes, i.e., the maps
\begin{align*}
    I \times \proj_{\PT} : \DP(\X,\T) &\to \WDSfh(\X,\T) \times \Pr(\T) \\
    I_{|\text{drift}} : \left\{\D_t \in \DP(\X,\T) \:\left|\: \D_t \text{ has} \atop \text{drift}\right.\right\} &\to \left\{\D_W \in \WDSfh(\X,\T) \:\left|\: \D_W \text{ is not} \atop \text{constant}\right.\right\}
\end{align*}
have left inverses and right concatenation result in extensions (cf.~\cref{prop:premain}). 
\end{theorem}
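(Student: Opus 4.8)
The plan is to reduce \cref{thm:main} to the already-established holistic-level results of \cref{prop:premain} and \cref{cor:premain} by inserting a single new ingredient: the inversion of the holistic map $\holist : \DP(\X,\T) \to \Pr(\X \times \T)$. Both maps in the statement factor through $\holist$, since $I = i \circ \holist$ by definition of the induced WDS. Hence, once we can recover the Markov kernel $\D_t$ from its holistic distribution $\D$, the desired left inverses are obtained by post-composing the left inverses $h$ and $f$ from \cref{prop:premain,cor:premain} with this reconstruction. The standard Borel hypothesis on $\X$ and $\T$ enters only to perform this reconstruction.

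First I would invoke the disintegration theorem. On standard Borel spaces every $\D \in \Pr(\X \times \T)$ admits a disintegration $\D(A \times W) = \int_W \D_t(A) \d \PT$ along its time-marginal $\PT(W) = \D(\X \times W)$, where $t \mapsto \D_t$ is a Markov kernel uniquely determined $\PT$-almost everywhere. Writing $\textnormal{dis} : \Pr(\X \times \T) \to \DP(\X,\T)$, $\D \mapsto (\D_t, \PT)$ for this assignment, existence of the disintegration gives $\holist \circ \textnormal{dis} = \textnormal{id}$, while $\PT$-a.e. uniqueness gives $\textnormal{dis} \circ \holist = \textnormal{id}$ after identifying distribution processes whose kernels agree $\PT$-a.e. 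The measurability of $t \mapsto \D_t(A)$ required by \cref{def:DP} is part of the disintegration output, and the finite-horizon property of the resulting WDS is inherited from \cref{prop:induced_WDS}.

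Next I would compose. For the first map I take $\textnormal{dis} \circ h$ as the candidate left inverse of $I \times \proj_{\PT}$, and for the drift map I take $\textnormal{dis} \circ f$ as the candidate left inverse of $I_{|\text{drift}}$. To verify the left-inverse property at $(\D_t,\PT)$, note that the time-marginal of $\holist(\D_t,\PT)$ is again $\PT$, so $h(I(\D_t),\PT) = h(i(\holist(\D_t,\PT)),\PT) = \holist(\D_t,\PT)$ by \cref{prop:premain}; applying $\textnormal{dis}$ and using $\textnormal{dis} \circ \holist = \textnormal{id}$ returns $(\D_t,\PT)$. The drift map is handled identically with $f$ in place of $h$, using that $\D_t$ has drift iff $I(\D_t)$ is non-constant (\cref{prop:induced_WDS}), so that domains and codomains match. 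For the extension claim I would use the opposite composition: $\holist \circ \textnormal{dis} = \textnormal{id}$ reduces $(I \times \proj_{\PT}) \circ (\textnormal{dis} \circ h)$ evaluated at $(\D_W,\PT)$ to $(i(h(\D_W,\PT)),\PT)$, whose first coordinate extends $\D_W$ by the extension part of \cref{prop:premain}; the drift map is analogous.

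The main obstacle is that the disintegration is unique only $\PT$-almost everywhere, so a left inverse in the strict set-theoretic sense cannot exist: $\holist$ forgets the behaviour of $\D_t$ on $\PT$-null sets of times. The statement must therefore be read in the quotient of $\DP(\X,\T)$ by $\PT$-a.e. equivalence of kernels, which is the natural identification since such processes produce identical window samples and are statistically indistinguishable. Making this quotient precise and checking that every map in the diagram of \cref{fig:overview} descends to it is the only genuinely delicate point; the remaining verifications are bookkeeping on top of the disintegration theorem and \cref{prop:premain,cor:premain}.
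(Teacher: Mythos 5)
Your proposal is correct and takes essentially the same route as the paper: the paper's proof likewise uses the standard Borel hypothesis to invert $\holist$ (``every probability measure can uniquely be decomposed into a marginal distribution and a Markov kernel, thus $\holist$ is a bijection'') and then concatenates with the left inverses from \cref{prop:premain,cor:premain}, which is exactly your $\textnormal{dis} \circ h$ and $\textnormal{dis} \circ f$ construction. Your closing caveat --- that disintegration is unique only $\PT$-almost everywhere, so $\holist$ is injective (and the theorem literally true) only after identifying kernels agreeing $\PT$-a.e. --- is a genuine subtlety that the paper's one-line proof silently elides, and making it explicit strengthens rather than departs from the argument.
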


As can be seen, for finite temporal horizons, in the interesting cases, i.e., those where there is drift, distribution processes, and WDSs model the same situations although the representations are different. This is particularly relevant as distribution processes are easier to work with for modeling tasks while WDSs offer a more algorithm-centered point of view. Thus, \cref{thm:main} allows us to connect the data modeling and the algorithm design and analysis perspective.

Indeed, several algorithms are based on the idea of treating time more or less explicitly as a random variable~\cite{dawidd,oneortwo,ida2023}; \cref{thm:main} shows that this is not just a modeling choice but, as long as we process the data using sliding windows, deeply build into the algorithmic design independent of whether or not considered explicitly.

Furthermore, if we drop the assumption of finite temporal horizons, WDSs model more general situations that can no longer be described using (post hoc) distribution processes. This, however, is beyond the scope of this paper. 

For all statements above we give constructive proofs. In the next section, we will translate those to approximate algorithms for finite data, before turning to an empirical analysis in \cref{sec:exp}.

\section{An Algorithmic Approach}
\label{sec:algo}

In the last section, we discussed the connection between WDSs and distribution processes. While deriving a WDS from a distribution process is straightforward (\cref{prop:induced_WDS}), the other direction is more complicated (\cref{prop:exist_PT,prop:premain}, and \cref{thm:main}). However, while our considerations assure the existence of a distribution process the applied techniques do not directly translate to algorithms. 
In the following, we consider the algorithmic task: given a sequence of windows $W_1,...,W_n \subset \R$ and probabilities $\D_{W_i}(A_j)$ find the associated distribution process, i.e., $\PT$ and $\D_t$ such that 
\begin{align}
    \frac{1}{\PT(W_i)}\int_{W_i} \D_t(A_j) \d \PT(t) \approx \D_{W_i}(A_j).  \label{eq:Exp1:problem1}
\end{align}

\begin{algorithm}[!t]
	\caption{Solver for \eqref{eq:Exp1:opt1}}
	\label{algo:opt}
	\begin{algorithmic}[1]
		\Function{ReconstructDistributionProcess}{$\Wm$, $\Rm$} 
		\State $\Pm \gets \textsc{Normalize}(\1)$
		\Repeat 
        \State $\Dm \gets \textsc{Normalize}(\textsc{NNLS}(\Wm \diag(\Pm) \Dm , \diag(\Wm \Pm)\Rm))$ \label{algo:opt:compteD}
        \State $S \gets \textsc{VStack}(\;(\Wm_{it}\Dm_{tj})_{(i,j),t} - (\Wm_{it}\Rm_{ij})_{(i,j),t}, \1)$
        \State $\Pm \gets \textsc{Normalize}(\textsc{NNLS}(S,(0,\dots,0,1)^\top)$
		\Until{Convergence}
		\State \Return $\Pm,\Dm$
		\EndFunction
	\end{algorithmic}
\end{algorithm}
Due to limitations regarding the temporal resolution we will consider the smallest set that allows to describe each $W_i$ as a disjoint union as time, i.e., $\T = \{W_1^{s_1} \cap \dots \cap W_n^{s_n} \mid s_i \in \{-1,1\}\}$ with $W_i^1 = W_i, W_i^{-1} = W_i^C$. Furthermore, we assume that the sets $A_1,\dots,A_m$ are disjoint and cover $\X$. This allows us to conveniently write $\Rm_{ij} = \D_{W_i}(A_j)$ as a $n \times m$-matrix, $\Dm_{tj} = \D_t(A_j)$ as a $N \times m$-matrix, and $\Pm_t = \PT(\{t\})$ as a $N$-dimensional vector, were $N = |\T| \leq 2^n$. By representing $W_i$ by its elementary windows we can represent the windows $W_1,\dots,W_n$ as yet another $n \times N$ matrix $\Wm$ allowing us to express \cref{eq:Exp1:problem1} as
\begin{align}
    \diag(\Wm \Pm)^{-1} \Wm \diag(\Pm) \Dm \approx \Rm \label{eq:Exp1:problem3}
\end{align}
where $\diag(v)$ is the diagonal matrix with entries $v$. Using an MSE approach we end up with the following optimization problem
\begin{align}
    \min_{\Pm \in \R^N, \Dm \in \R^{N \times m}} \:\quad & \Vert \Wm \diag(\Pm) \Dm - \diag(\Wm \Pm)\Rm \Vert_F^2 \label{eq:Exp1:opt1} \\ 
    \textnormal{s.t.}\: \quad  & \Pm^\top\1 = 1 \nonumber\\
    &\Dm \1 = \1 \nonumber\\
    &\Pm,\Dm \geq 0 \nonumber
\end{align}
alternatively, we can also consider the more direct objective by only taking the norm of \cref{eq:Exp1:problem3} which also allows us to drop the first constraint.

It is easy to see that we can approximate $\Dm$ given $\Pm$ by solving a non-negative least square problem. Similarly, $\Pm$ can be found given $\Dm$ using a similar approach. Hence, we suggest a coordinate descent scheme, starting at $\Pm = \1/N$. The algorithm is presented in \cref{algo:opt}, where $\textsc{NNLL}$ is a non-negative least squares solver, $\textsc{Normalize}$ normalizes column sums to 1, and $(\Wm_{it}\Dm_{tj})_{(i,j),t} \in \R^{(n \cdot m)\times N}$ denotes the matrix obtained by flattening $i,j$ into one dimension.

\section{Experiments}
\label{sec:exp}
To showcase the more practical aspects of our consideration we conduct two numerical analyses. In the first, we evaluate \cref{algo:opt} in a reconstruction task. In the second, we consider a use case in the field of water distribution networks in which we are interested in statistical properties of $\D_t$ and $\PT$.

\subsection{Reconstructing the Distribution Process}
\label{sec:Exp1}

\begin{table}[t]
    \centering
    \caption{Result of reconstruction experiment (mean and std. over 500 runs of (median) negative logarithm of difference; $\approx$ correct decimals, larger is better). Columns: D / P reconstruction error of $\Dm$ / $\Pm$, f objective (according to \cref{eq:Exp1:opt1}).}
    {\footnotesize
    \begin{tabular}{l@{\:\:}r@{$\pm$}l@{\;}r@{$\pm$}l@{\;}r@{$\pm$}l@{\quad}r@{$\pm$}l@{\;}r@{$\pm$}l@{\;}r@{$\pm$}l@{\quad}r@{$\pm$}l@{\;}r@{$\pm$}l@{\;}r@{$\pm$}l@{\quad}r@{$\pm$}l@{\;}r@{$\pm$}l@{\;}r@{$\pm$}l}
\toprule
 & \multicolumn{6}{c}{Algo.~\ref{algo:opt} (ours)} & \multicolumn{6}{c}{Nelder-Meat \eqref{eq:Exp1:opt1}} & \multicolumn{6}{c}{SLSQP \eqref{eq:Exp1:opt1}} & \multicolumn{6}{c}{SLSQP \eqref{eq:Exp1:problem3}} \\
rank & \multicolumn{2}{c}{D} & \multicolumn{2}{c}{P} & \multicolumn{2}{c}{f} & \multicolumn{2}{c}{D} & \multicolumn{2}{c}{P} & \multicolumn{2}{c}{f} & \multicolumn{2}{c}{D} & \multicolumn{2}{c}{P} & \multicolumn{2}{c}{f} & \multicolumn{2}{c}{D} & \multicolumn{2}{c}{P} & \multicolumn{2}{c}{f} \\
\midrule
1.0 & 29 & 8 & 3 & 1 & 30 & 7 & 30 & 6 & 2 & 0 & 12 & 2 & 1 & 0 & 2 & 0 & 14 & 3 & 7 & 5 & 2 & 0 & 20 & 1 \\
1.4 & 30 & 7 & 4 & 1 & 28 & 6 & 30 & 7 & 2 & 0 & 12 & 2 & 2 & 1 & 2 & 0 & 14 & 1 & 8 & 6 & 3 & 0 & 19 & 2 \\
1.9 & 31 & 5 & 5 & 1 & 30 & 5 & 30 & 7 & 2 & 0 & 12 & 2 & 3 & 3 & 2 & 0 & 14 & 2 & 10 & 7 & 3 & 0 & 19 & 2 \\
2.2 & 31 & 5 & 6 & 1 & 31 & 5 & 29 & 8 & 2 & 0 & 11 & 2 & 4 & 4 & 3 & 0 & 15 & 2 & 12 & 7 & 3 & 0 & 19 & 2 \\
2.6 & 31 & 5 & 6 & 1 & 32 & 4 & 29 & 8 & 2 & 0 & 11 & 2 & 5 & 4 & 3 & 0 & 16 & 2 & 11 & 7 & 3 & 0 & 19 & 2 \\
\bottomrule
\end{tabular}}
    \label{tab:resultsExp1}
\end{table}

To empirically evaluate \cref{algo:opt} we considered a reconstruction task, i.e., we start with $\Dm,\Pm,\Wm$ from which we compute $\Rm$ and then reconstruct $\Dm$ and $\Pm$ form $\Wm$ and $\Rm$. In our experiment we considered $|\X| = m = 5$ and randomly constructed intervals as time windows, we refer to the number of windows per time-point $N/n$ as the ``rank'' of the problem. Besides \cref{algo:opt} we also considered SLSQP and Nelder-Mead as solvers, using \cite{LaueMG2019} to compute the gradients. In the latter two cases, we considered the problem as stated in \eqref{eq:Exp1:opt1} as well as using \cref{eq:Exp1:problem3} directly, whereby we drop the first constraint in the second case. In the case of Nelder-Mead, we only optimize $\Pm$ and then find $\Dm$ as in line~\ref{algo:opt:compteD}.

We repeated the procedure 500 times for different ranks. The results are shown in \cref{tab:resultsExp1}. Furthermore, we performed a rank analysis showing that our approach performed overall significantly best in all cases (post-hoc Nemenyi test at $\alpha = 0.001$). SLSQP on \cref{eq:Exp1:problem3} performed significantly better than on \eqref{eq:Exp1:opt1}, both Nelder-Mead approaches performed similarly in all cases. For the objective and optimizing $\Pm$ SLSQP outperformed Nelder-Mead, for $\Dm$ the opposite holds. This implies that line~\ref{algo:opt:compteD} constitutes an efficient way to estimate $\Dm$.
In particular, we found an excellent reconstruction of assuming a rank $\geq 2$. 

\subsection{Application to the Water-Domain}
\label{sec:Exp2}
\begin{figure}[t]
    \begin{subfigure}[t]{0.32\textwidth}
    \centering
    \includegraphics[width=\textwidth]{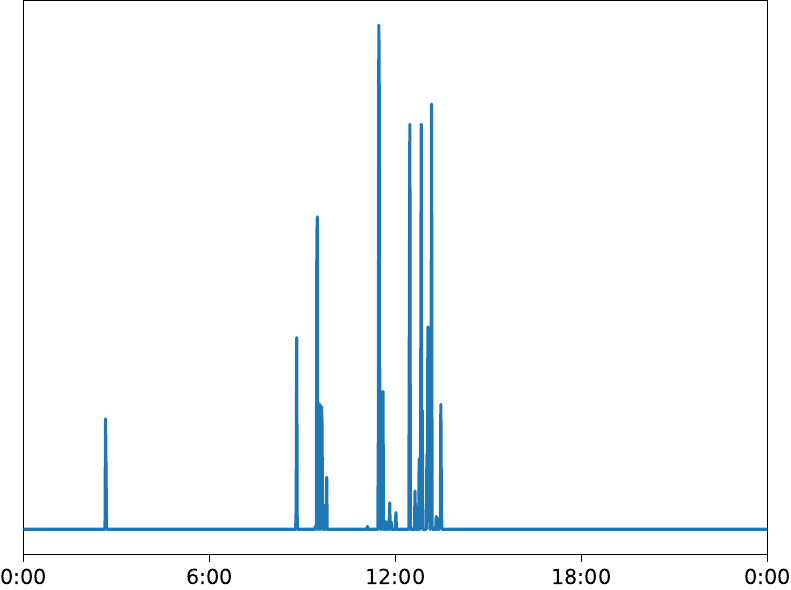}
    \subcaption{Consumption of one household over a day
    \label{fig:water-household}}
    \end{subfigure}
    \hfill
    \begin{subfigure}[t]{0.32\textwidth}
    \centering
    \includegraphics[width=\textwidth]{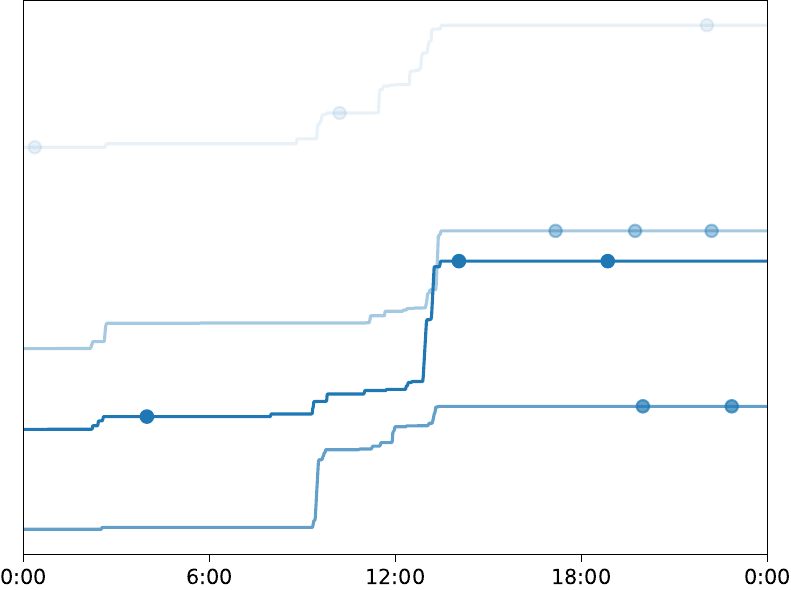}
    \subcaption{Cumulative consumption, observations marked\label{fig:water-meter-readings}}
    \end{subfigure}
    \hfill
    \begin{subfigure}[t]{0.32\textwidth}
    \centering
    \includegraphics[width=\textwidth]{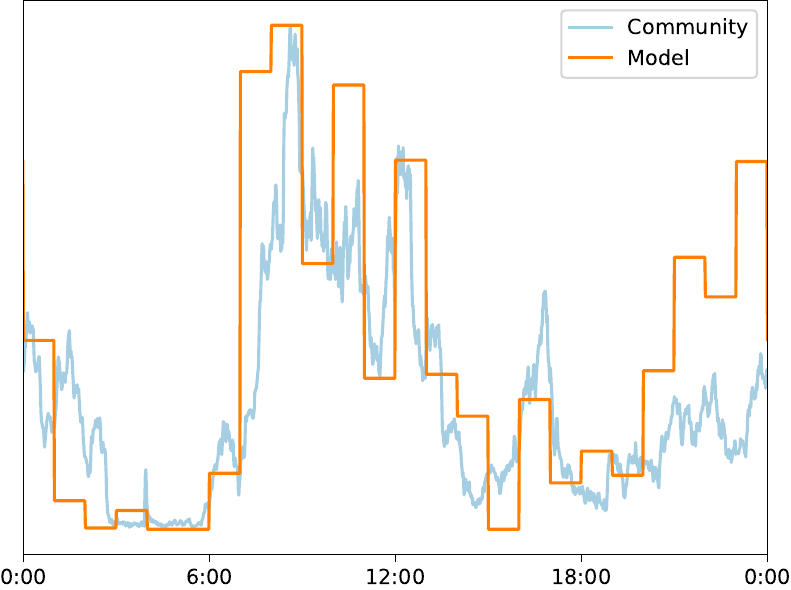}
    \subcaption{Community consumption and model prediction\label{fig:water-pred}}
    \end{subfigure}
    \caption{Experiment on water data. Figure shows original consumption data (a; unknown), cumulative consumption (b; unknown) and training data (marks in b; known), as well as time point-wise reconstruction and goal prediction (c).}
    \label{fig:water}
\end{figure}
Finally, we consider a more practical problem. Water distribution networks (WDNs) are a crucial part of the critical infrastructure supplying private and commercial customers reliably with clean and safe drinking water. To ensure the functionality of the system, planning and control based on water demand estimates are mandatory \cite{vaquet_challenges_2024}. However, WDNs are rarely equipped with smart meters measuring demands due to the associated costs and privacy concerns, making demand estimation a tough problem.
In this case study we assume that instead of having information on the actual water consumption at every time point as visualized in \cref{fig:water-household}, the accumulated water consumption is read out from traditional water meters at some scarce time points, i.e., we obtain the cumulative consumption of a given household (see \cref{fig:water-meter-readings}). The goal is to predict the typical water demand patterns accumulated over multiple households. Having this information, in particular the peak demands, available is important for the planning and control of the network.

Observe that this task relates to our previous considerations as the readings of the water meters correspond to the weighted window means $\PT(W)\D_W$, while the peak consumption corresponds to quantiles of $\D_t$. 
To approach the problem, we model the consumption of a single household using an inhomogeneous compound Poisson process, i.e., we assume that at random times a random amount of water is consumed. As a consequence, we can model communal consumption as a normal distribution whose parameters can be derived from the Poisson processes. This allows us to simplify \cref{algo:opt} to the estimation of $\Dm$ only as the moment of consumption has no influence on how much the water meter counts.

For our experiment we simulated $12{,}000$ households using \cite{steffelbauer2024pysimdeum} over 28 days, assuming 4 reports per day on average. We used $3{,}500$ households to fit the model to make hourly estimates of the consumption. The raw data as well as the derived model for one day are presented in \cref{fig:water-pred}. As can be seen, our model captures the overall consumption quite well. Especially it estimates the peak demands very precisely.

\section{Concluding Remarks and Future Work}
\label{sec:conclusion}

In this paper, we introduced a novel framework for handling data over time and concept drift by focusing on a time-window-based approach rather than the traditional point-in-time perspective. Our framework aligns more closely with practical, algorithmic needs, making it a useful tool for real-world applications where data distributions evolve. We compared our framework to the existing ones on a theoretical level and found that both model the same situation despite their different starting points, providing yet another justification for the 'time-as-a-feature' paradigm. We numerically evaluated our approach through experiments showing that our window-based setup can effectively reconstruct the information captured by the point-wise setup, thereby validating the equivalence numerically.
To illustrate the practical impact of our framework, we applied it to a case study in critical infrastructure, specifically within water distribution networks. This application highlights that our consideration can provide meaningful insights for considering dynamic environments, showing the relevance of our ideas beyond purely theoretical insights.

Unlike the framework proposed by \cite{dawidd}, which is restricted to finite horizons, we conjecture that our framework has the potential to handle infinite temporal horizons. Such considerations offer a foundation to statistically model infinite data streams which we will address in future work.

\bibliography{bib}

\begin{thebibliography}{1}

\bibitem{gama}
J.~Gama, I.~Žliobaitė, A.~Bifet, M.~Pechenizkiy, and A.~Bouchachia.
\newblock A survey on concept drift adaptation.
\newblock {\em ACM Comput. Surv.}, 46(4), March 2014.

\bibitem{oneortwo}
F.~Hinder, V.~Vaquet, and B.~Hammer.
\newblock One or two things we know about concept drift—a survey on
  monitoring in evolving environments. part a: detecting concept drift.
\newblock {\em Frontiers in Artificial Intelligence}, 7:1330257, 2024.

\bibitem{dawidd}
Fabian Hinder, Andr{\'e} Artelt, and Barbara Hammer.
\newblock Towards non-parametric drift detection via dynamic adapting window
  independence drift detection (dawidd).
\newblock In {\em ICML}, pages 4249--4259, 2020.

\bibitem{ida2023}
Fabian Hinder, Valerie Vaquet, Johannes Brinkrolf, and Barbara Hammer.
\newblock On the change of decision boundary and loss in learning with concept
  drift.
\newblock In {\em IDA}, pages 182--194, 2023.

\bibitem{LaueMG2019}
S\"{o}ren Laue, Matthias Mitterreiter, and Joachim Giesen.
\newblock {GENO} -- {GEN}eric {O}ptimization for classical machine learning.
\newblock In {\em NeurIPS}. 2019.

\bibitem{lu2018}
Jie Lu, Anjin Liu, Fan Dong, Feng Gu, Joao Gama, and Guangquan Zhang.
\newblock Learning under concept drift: A review.
\newblock {\em IEEE TKDE}, 31(12):2346--2363, 2018.

\bibitem{shalev2014understanding}
Shai Shalev-Shwartz and Shai Ben-David.
\newblock {\em Understanding machine learning: From theory to algorithms}.
\newblock Cambridge university press, 2014.

\bibitem{steffelbauer2024pysimdeum}
David Steffelbauer, Bram Hillebrand, and Mirjam Blokker.
\newblock pysimdeum-an open-source stochastic water demand end-use model.
\newblock In {\em WDSA \& CCWI}, 2024.

\bibitem{vaquet_challenges_2024}
Valerie Vaquet, Fabian Hinder, André Artelt, Inaam Ashraf, Janine Strotherm,
  Jonas Vaquet, Johannes Brinkrolf, and Barbara Hammer.
\newblock Challenges, {Methods}, {Data} -- a {Survey} of {Machine} {Learning}
  in {Water} {Distribution} {Networks}.
\newblock In {\em ICANN}. Springer, 2024.

\end{thebibliography}

\Arxiv{}{
\newpage
\section{Proofs}
In the following, we will provide proofs for the formal statements given in the paper.

\subsection{Proof of \cref{prop:induced_WDS}}
The proof mainly consists of a sequence of straightforward routine checks and simple computations. We only provide it for completeness:
\begin{proof}[\cref{prop:induced_WDS}]
It suffices to consider $i$ as $I = i \circ \holist$. 

Let $P$ be a probability measure on $\X \times \T$. Define $\PT(S) = P(\X \times S)$ the marginalization on $\T$ and $\W = \{S \in \Sigma_\T \mid \PT(S) > 0\}$, $\Null = \{S \in \Sigma_\T \mid \PT(S) = 0\}$ and $\D_W(A) = P(A \times W \mid \X \times W)$ for $W \in \W$.

\emph{$(\W,\Null)$ is a WS: }Obviously, $\Null \cap \W = \emptyset$.
For $S_1,S_2,\dots \in \Null$ we have 
\begin{align*}
    \PT\left(\bigcup_{n \in \N} S_n\right) \leq \sum_{n \in \N} P(S_n) = 0. 
\end{align*}
So $\bigcup_{n \in \N} S_n \in \Null$.
If $S_1 \subset S_2 \in \Null$ then $\PT(S_1) \leq \PT(S_2) = 0$ so $\PT(S_1) = 0$ and hence $S_1 \in \Null$. 
Similarly, if $S_1,S_2 \in \W$ then $0 < \PT(S_1) \leq \PT(S_1) + \PT(S_2 \setminus S_1) = \PT(S_1 \cup S_2)$ so $S_1 \cup S_2 \in \W$.
So $\Null$ is a $\sigma$-ideal, $\W$ is closed under finite unions, and $\Null$ and $\W$ are disjoint. 
Now, $\Null \cup \W = \{S \in \Sigma_\T \mid \PT(S) = 0 \text{ or } \PT(S) > 0\} = \Sigma_\T$ is a semi-ring and $\sigma(\Sigma_\T) = \Sigma_\T$ so it also generates $\Sigma_\T$ locally, and as $\T \in \W$ it also covers $\T$ and has a finite horizon. 
Hence, $(\W,\Null)$ are a WS.

\emph{$\D_W$ is a WDS: }
For $0 < a \leq b$ it holds $|1/a - 1/b| \leq |a-b| \frac{1}{a^2}$. Furthermore, for $W_1,W_2 \in \W$ with $\PT(W_1) \leq \PT(W_2)$ and $A \in \Sigma_\X$ and setting $\delta := \PT(W_1 \vartriangle W_2)$ it holds
\begin{align*}
    |P(A \times W_1) - P(A \times W_2)| 
    &\leq P((A \times W_1) \triangle (A \times W_2)) 
    \\&= P(A \times (W_1 \triangle W_2)) 
    \\&\leq \PT(W_1 \vartriangle W_2) = \delta
\end{align*}
Therefore,  we have
\begin{align*}
    &|\D_{W_1}(A)-\D_{W_2}(A)| 
    \\&=\quad \left|\frac{P(A \times W_1)}{\PT(W_1)} - \frac{P(A \times W_2)}{\PT(W_2)}\right|
    \\&\leq\quad \left|\frac{P(A \times W_1)}{\PT(W_1)} -\frac{P(A \times W_1)}{\PT(W_2)}\right|
    +\left|\frac{P(A \times W_1)}{\PT(W_2)}- \frac{P(A \times W_2)}{\PT(W_2)}\right|
    \\&\leq\quad \underbrace{\frac{P(A \times W_1)}{\PT(W_1)^2}}_{\leq \PT(W_1)^{-1}}\underbrace{\left|\PT(W_1) -\PT(W_1)\right|}_{=\delta}
    +\underbrace{\frac{1}{\PT(W_2)}}_{\leq \PT(W_1)^{-1}}\underbrace{\left|P(A \times W_1)- P(A \times W_2)\right|}_{\leq \delta}
    \\&\leq\quad \frac{2}{\PT(W_1)} \delta
\end{align*}
Hence, for $W_1 \vartriangle W_2 \in \Null, \; W_1,W_2 \in \W$ we have $\delta = 0$ and hence 
\begin{align*}
    |\D_{W_1}(A)-\D_{W_2}(A)| \leq \frac{2}{\PT(W_1)} \cdot 0 = 0
\end{align*}
and for $A \in \Sigma_\X$, $W_1 \subset W_2 \subset \dots \nearrow W, \; W,W_1,W_2,\dots \in \W$ and any $\varepsilon > 0$ choose $n$ such that $\PT(W) - \PT(W_n) \leq \frac{1}{4}\PT(W_1) \varepsilon$ then it holds
\begin{align*}
    |\D_{W_n}(A)-\D_{W}(A)| 
    &\leq \frac{2}{\PT(W_n)} \cdot \frac{\PT(W_1) \varepsilon}{4} 
    \\&= \frac{1}{2}\underbrace{\frac{\PT(W_1)}{\PT(W_n)}}_{\leq 1}\varepsilon < \varepsilon
\end{align*}

Now for $W_1,\dots,W_n \in \W$ pairwise disjoint we set $W = \cup_{i = 1}^n W_i$, $\lambda_i = \PT(W_i \mid W)$. Then for any subset $I \subset \{0,\dots,n\}$ and $W_I = \cup_{i \in I} W_i$ we have
\begin{align*}
    \D_{W_I}(A) 
    &= \frac{P(A \times W_I)}{\PT(W_I)}
    \\&= \frac{\sum_{i \in I} P(A \times W_i)}{\sum_{i \in I} \PT(W_i)}
    \\&= \frac{\sum_{i \in I} P(A \times W_i \mid \X \times W_i) \PT(W_i \mid W_I) \PT(W_I)}{\sum_{i \in I} \PT(W_i\mid W_I) \PT(W_I)}
    \\&= \frac{\sum_{i \in I} \lambda_i\D_{W_I}(A)}{\sum_{i \in I} \lambda_i}.
\end{align*}

To show that $i(P)$ is not constant if and only if $P$ has drift, recall that $\D$ has no drift if and only if $\D(A \times W) = \D_\T(A)\PT(W)$ for all $A,W$. Therefore, if $P$ has no drift then $\D_W(A) = \frac{P(A \times W)}{\PT(W)} = \frac{\D_\T(A)\PT(W)}{\PT(W)} = \D_\T(A)$ is constant. Conversely, to check that $P = \D_\T \times \PT$ it suffices to do this on an intersection stable generator like $\Sigma_\X \times \Sigma_\T$ on which it holds
\begin{align*}
    P(A \times W) 
    &= P(A \times W \mid \X \times W) \PT(W)
    \\&= \D_W(A) \PT(W)
    \\&= \D_\T(A) \PT(W)
    \\&= (\D_\T \times \PT)(A \times W).
\end{align*}
\end{proof}

\subsection{Proof of \cref{prop:premain}}
To prove \cref{prop:premain} we need a version of Fubini's theorem for elementary conditional probabilities which we will prove first. Once that is done, the actual proof is rather straightforward. 
Recall that Fubini's theorem for regular conditional probabilities states the following: For measurable spaces $(X,\Sigma_X)$ and $(Y,\Sigma_Y)$, a probability measure $P$ on $X$, and a Markov kernel $Q$ from $X$ to $Y$ there exists a unique probability measure $H$ on $(X \times Y, \Sigma_X \otimes \Sigma_Y)$ such that
\begin{align*}
    \int f(x,y) \d H(\,(x,y)\,) = \iint f(x,y) \d Q(y \mid x) \d P(x)
\end{align*}
for all bounded measurable $f : X \times Y \to \R$. We want to extend this theorem in the sense that if we take the conditional probabilities $Q_A(B) = H(A \times B \mid A \times Y) := H(A \times B) / H(A \times Y)$ and the marginal $P(A) = H(A \times Y)$ then we can reconstruct $H$. More formally:
\begin{theorem}[Fubini's theorem for elementary conditional probabilities]
    \label{lem:elementary_fubini}
	\newcommand{\F}{\mathcal{F}}
	Let $(X,\Sigma_X), (Y,\Sigma_Y)$ be measurable spaces with countably generated $\sigma$-algebras. Let $P$ be a probability measure on $X$, $\F \subset \Sigma_X$ be a subset of $P$ non-null sets, and $(Q_A)_{A \in \F}$ a family of probability measures on $Y$ such that
	\begin{enumerate}
		\item $\F \cup \Null$ in an algebra that contains a countable generator of $\Sigma_X$, where $\Null \subset \Sigma_X$ are the $P$ null sets. 
		\item For $A = A_1 \cup \dots \cup A_n$ disjoint, $A,A_1,\dots,A_n \in \F$ we have
		$$Q_A = \sum_{i = 1}^n P(A_i \mid A) Q_{A_i}$$
		\item For $A_1, A_2 \in \F$ with $P(A_1 \triangle A_2) = 0$ we have $Q_{A_1} = Q_{A_2}$
		\item For $A_1 \subset A_2 \subset \dots \nearrow A$ with $A,A_1,\dots \in \F$ we have $$\lim_{n \to \infty} Q_{A_n}(B) = Q_{A}(B)$$ for all $B \in \Sigma_Y$ 
	\end{enumerate}
	Then there exist a unique probability measure $H$ on $(X \times Y, \Sigma_X \otimes \Sigma_Y)$ such that $H(A \times Y) = P(A)$ for all $A \in \Sigma_X$ and $Q_A(B) = H(A \times B \mid A \times Y)$ for all $A \in \F, B \in \Sigma_Y$. 
\end{theorem}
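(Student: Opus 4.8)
The plan is to reduce the statement to the classical Fubini theorem for regular conditional probabilities recalled above: I will manufacture a genuine Markov kernel $Q$ from $X$ to $Y$ out of the set-indexed family $(Q_A)_{A \in \mathcal{F}}$ and then take $H$ to be the measure it produces together with $P$. This proceeds in two stages --- first freezing the second coordinate to build a family of measures on $X$, then assembling these into the kernel --- followed by a uniqueness argument. Throughout write $\mathcal{A} := \mathcal{F} \cup \Null$, which by the first hypothesis is an algebra generating $\Sigma_X$ (it contains a countable generator and lies in $\Sigma_X$, so $\sigma(\mathcal{A}) = \Sigma_X$), and note that every $A \in \mathcal{A}$ is either $P$-null (in $\Null$) or $P$-non-null (in $\mathcal{F}$).

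First I fix $B \in \Sigma_Y$ and define a set function $\nu_B$ on $\mathcal{A}$ by $\nu_B(A) = Q_A(B)\,P(A)$ for $A \in \mathcal{F}$ and $\nu_B(N) = 0$ for $N \in \Null$. I would check that $\nu_B$ is a premeasure dominated by $P$. Finite additivity on $\mathcal{A}$ follows from the second hypothesis when all sets involved are non-null, and from the third hypothesis to absorb null pieces (if $A' \in \Null$ then $Q_{A \cup A'} = Q_A$ and $P(A \cup A') = P(A)$, so $\nu_B(A \cup A') = \nu_B(A)$). For countable additivity it suffices to verify continuity from below on $\mathcal{A}$: if $A_n \nearrow A$ with $A \in \mathcal{F}$, then $P(A_n) \nearrow P(A) > 0$ forces $A_n \in \mathcal{F}$ eventually, whence the fourth hypothesis gives $Q_{A_n}(B) \to Q_A(B)$ and continuity of $P$ gives $P(A_n) \to P(A)$, so $\nu_B(A_n) \to \nu_B(A)$ (the case $A \in \Null$ being trivial). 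Carathéodory extension then yields a unique measure $\mu_B$ on $\sigma(\mathcal{A}) = \Sigma_X$ with $\mu_B \le P$ and $\mu_B(A) = Q_A(B)\,P(A)$ on $\mathcal{F}$.

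Because each $Q_A$ is itself a probability measure on $Y$, the assignment $B \mapsto \mu_B$ is countably additive in $B$ (for disjoint $(B_j)$ one has $\mu_{\bigcup_j B_j} = \sum_j \mu_{B_j}$ first on $\mathcal{F}$, hence on all of $\Sigma_X$ by uniqueness of the extension), with $\mu_\emptyset = 0$ and $\mu_Y = P$. Writing $q_B := \mathrm{d}\mu_B/\mathrm{d}P \in [0,1]$ for the Radon--Nikodym densities, the family $(q_B)$ is thus an $L^1(P)$-valued measure behaving $P$-a.e.\ like the evaluations $B \mapsto Q_x(B)$ of a kernel. The decisive step is to turn this into an honest Markov kernel $x \mapsto Q_x$ on $(Y,\Sigma_Y)$ with $Q_x(B) = q_B(x)$ for $P$-a.e.\ $x$: fixing versions of $q_G$ on a countable generating algebra $\mathcal{G}$ of $\Sigma_Y$ makes $G \mapsto q_G(x)$ finitely additive with total mass one off a single $P$-null set, and countable generation is exactly what lets me realize $Y$ on the standard Borel model $\{0,1\}^{\mathbb{N}}$ via $y \mapsto (\mathbf{1}_{G_k}(y))_k$ and invoke the existence of regular conditional probabilities there, transferring the kernel back to $Y$. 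Once $Q$ is in hand, the classical Fubini theorem produces $H$ on $\Sigma_X \otimes \Sigma_Y$; evaluating it on rectangles gives $H(A \times Y) = \int_A Q_x(Y)\,\mathrm{d}P = P(A)$ and, for $A \in \mathcal{F}$, $H(A \times B) = \int_A q_B\,\mathrm{d}P = \mu_B(A) = Q_A(B)\,P(A)$, which rearranges to $Q_A(B) = H(A \times B \mid A \times Y)$.

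For uniqueness I would note that the rectangles $\{A \times B : A \in \mathcal{A},\ B \in \Sigma_Y\}$ are intersection-stable, contain $X \times Y$, and generate $\Sigma_X \otimes \Sigma_Y$; any admissible $H$ is forced on them ($H(A \times B) = Q_A(B)\,P(A)$ for $A \in \mathcal{F}$, and $H(A \times B) \le H(A \times Y) = P(A) = 0$ for $A \in \Null$), so two solutions agree by the $\pi$--$\lambda$ theorem. The main obstacle is the kernel-assembly step: the densities $q_B$ are each defined only up to a $P$-null set and there are uncountably many decreasing sequences in $\Sigma_Y$ to control simultaneously, so promoting the separately-additive family $(q_B)$ to a version that is a bona fide probability measure in $B$ for (almost) every fixed $x$ is the genuine regular-conditional-probability problem. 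This is precisely where the countable-generation hypothesis is indispensable, via the reduction to a standard Borel model, and care is needed to verify that the kernel transferred back to $Y$ still represents every $\mu_B$.
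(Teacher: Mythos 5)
Your first stage and your uniqueness argument are sound: the set functions $\nu_B(A)=Q_A(B)P(A)$ are premeasures on the algebra $\mathcal{F}\cup\Null$, they extend to measures $\mu_B\le P$ on $\Sigma_X$ that are countably additive in $B$, and any admissible $H$ is pinned down on the intersection-stable generator $\{A\times B\}$ by the $\pi$--$\lambda$ theorem. The gap is exactly the step you call decisive: promoting the density family $(q_B)$ to an honest Markov kernel. Under the hypotheses of the theorem such a kernel need not exist at all, so no amount of care in the transfer can close the argument. Countable generation of $\Sigma_Y$ is strictly weaker than what existence of regular conditional probabilities requires (a standard Borel structure, or perfectness), and the transfer back from $\{0,1\}^{\mathbb{N}}$ is where this bites: the kernel $\tilde Q_x$ on the model space can, for $x$ in a set of full outer measure, be concentrated on points outside the image $\phi(Y)$; since $\phi(Y)$ is in general not Borel and can have inner $\tilde Q_x$-measure complementary sets of mass one, the pullback $Q_x(B):=\tilde Q_x(C)$ for $B=\phi^{-1}(C)$ is not well defined, and the exceptional null sets cannot be aggregated over the uncountably many $B$.

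Concretely: let $X=[0,1]$ with Borel sets $\mathcal{B}$ and $P=\Lambda$ Lebesgue, let $M\subset[0,1]$ have inner measure $0$ and outer measure $1$, let $Y=[0,1]$ with $\Sigma_Y=\sigma(\mathcal{B}\cup\{M\})$ (still countably generated), and let $\hat\Lambda$ be the extension of $\Lambda$ to $\Sigma_Y$ given by $\hat\Lambda\bigl((B_1\cap M)\cup(B_2\cap M^c)\bigr)=\Lambda(B_1)$. Take $\mathcal{F}=\{A\in\mathcal{B}:\Lambda(A)>0\}$ and $Q_A(B)=\hat\Lambda(A\cap B)/\Lambda(A)$. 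Conditions 1--4 of the theorem hold, and the asserted $H$ exists (it is the push-forward of $\hat\Lambda$ under the diagonal $y\mapsto(y,y)$), so the theorem is not contradicted. But no Markov kernel from $(X,\mathcal{B})$ to $(Y,\Sigma_Y)$ with $\int_A Q_x(B)\d P(x)=\mu_B(A)=\hat\Lambda(A\cap B)$ can exist: representing the Borel sets $B\in\mathcal{B}$ forces, off a single $P$-null set (countable generating algebra plus a monotone-class argument), $Q_x(B)=\1[x\in B]$ for all $B\in\mathcal{B}$, hence $Q_x(\{x\})=1$ and therefore $Q_x(M)=\1[x\in M]$; yet the representation of $B=M$ forces $Q_x(M)=1$ for a.e.\ $x$, so $M$ would contain a measurable set of measure one, contradicting inner measure $0$. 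This is the classical Dieudonn\'e-type failure of regular conditional probabilities. The paper's proof is built to avoid precisely this: it never disintegrates $H$, but constructs it directly as a limit of finite-stage approximations conditioned only on the atoms of the finite sub-$\sigma$-algebras $\sigma(U_1,\dots,U_n)$ and $\sigma(V_1,\dots,V_n)$, glued together by Ionescu--Tulcea and extended by Carath\'eodory. If you add the assumption that $(Y,\Sigma_Y)$ is standard Borel, your route can be completed, but that weaker statement would not cover the abstract spaces to which the paper applies the theorem in \cref{lem:reconstruct_HD}.
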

\begin{proof}
	\newcommand{\Gx}{\mathcal{U}}
	\newcommand{\Gy}{\mathcal{V}}
	\newcommand{\Gi}{G_\infty}
	\newcommand{\F}{\mathcal{F}}
	\newcommand{\U}{\mathbf{U}}
	\newcommand{\V}{\mathbf{V}}
	\newcommand{\Ring}{\mathcal{R}}
	
	Let $\{U_1,U_2,\dots \} \subset \F \cup \Null$ a generator of $\Sigma_X$ and $\{V_1,V_2,\dots\} \subset \Sigma_Y$ a generator of $\Sigma_Y$. Denote by $\Gx_n = \sigma(\{U_1,\dots,U_n\}), \Gx = \cup_{n=1}^\infty \Gx_n$ and $\Gy_n = \sigma(\{V_1,\dots,V_n\}), \Gy = \cup_{n=1}^\infty \Gy_n$.
	
	We will make use of the following notation: For any set $A$ we write $A^i = A \times \dots \times A$ for taking the product of $A$ with itself $i$-times, we write $A^0 \times B = B$, and $A^{-1} = A^C$ for the complement of $A$. We denote by $Z = X \times Y$. And for any $m$ and any sequence $u \in \{-1,1\}^n$ of length $n$ with $m \leq n$ we write $\U_m^u = \cap_{i = 1}^m U_i^{u_i}$ and analog for $V_i$.  
	We write $\Gx_n * \Gy_n = \{A \times B \mid A \in \Gx_n, B \in \Gy_n\}$ and analog for $\Gx$, $\Gy$, etc. and $\Sigma_X \otimes \Sigma_Y = \sigma(\Sigma_X * \Sigma_Y)$.

	As $\Gx*\Gy$ contains an intersection stable generator on which the values of $H$ are defined, there can exist at most one measure that fulfills the requirements on $H$.
	
	To construct $H$ approximate it successively on $\Gx_n * \Gy_n$ as follows:
	
	Set $K_1$ as the measure which is given by the (weighted) sum of product measures $P(\cdot \mid U_1)Q_{U_1}$ and $P(\cdot \mid U_1^C)Q_{U_1^C}$ weighted by $P(U_1)$ and $P(U_1^C) = 1-P(U_1)$, respectively, i.e.,
	\begin{align*}
		K_0(A \times B) &= P(A \cap U_1)Q_{U_1}(B) + P(A \cap U_1^C)Q_{U_1^C}(B)
	\end{align*}
	
	Analogous define $K_i$ as the weighted sum of product measures as follows
	\begin{align*}
		K_i(A \times B \mid (x,y) \; ) &= \sum_{u,v \in \{-1,1\}} P(A \cap U_i^u \mid \U_{i-1}^{s(x)}) \\&\qquad\qquad\qquad\:\cdot\:\frac{Q_{U_i^u \cap \U_{i-1}^{s(x)}}(\V_{i-1}^{s(y)})}{Q_{\U_{i-1}^{s(x)}}(\V_{i-1}^{s(y)})} \\&\qquad\qquad\qquad\:\cdot\:Q_{U_i \cap \U_{i-1}^{s(x)}}(B \cap V_i^v \mid \V_{i-1}^{s(y)}) 
	\end{align*} 
	where $s(x) \in \{-1,1\}^\N$ with $s(x)_i = 1$ if $x \in U_i$ and $-1$ otherwise and analogous for $s(y)$. Here we ignore the summands with $P(U^u_n) = 0$ so that $\U^u_n \in \F$ for all remaining $u$. If $P(\U_{i-1}^{s(x)}) = 0$ we set $K_i(\cdot\mid (x,y)) = \delta_{(x,y)}$.
	
	By the Ionescu-Tulcea theorem there exists a measure $G$ on $((X \times Y)^\infty, \sigma(\otimes^\infty (\Sigma_X \otimes \Sigma_Y)))$ 
	that extends the finite products. 
	
	We claim that the desired measure is $H$ is the limit $K_1 K_2 \dots$ which is a marginal distribution of $G$. 
	Observe that for all $n$ we have
	\begin{align*}
		G(Z^{n-1} \times (A \times B) \times Z^\infty) &= \sum_{u,v \in \{-1,1\}^{n}} P(A \cap \U_n^{u}) Q_{\U_n^{u}}(B \cap \V_n^{v})
	\end{align*}
	which is clear for $n = 1$ and follows by induction for $n > 1$ as 
	\begin{align*}
		&\sum_{u,v \in \{-1,1\}^{n}} P(X \cap \U_{n-1}^{u}) Q_{\U_{n-1}^{u}}(Y \cap \V_{n-1}^{v}) \\&\qquad\qquad\cdot P(A \cap U_n^{u_n} \mid \U_{n-1}^u) \frac{ Q_{\U_n^u}(\V_{n-1}^v)}{Q_{\U_{n-1}^u}(\V_{n-1}^v)}Q_{\U_n^u}(B \cap V_n^{v_n} \mid \V_{n-1}^{v}) 
		\\&= \sum_{u,v \in \{-1,1\}^{n}} P(A \cap \underbrace{U_n^{u_n} \cap \U_{n-1}^u}_{= \U_n^u}) Q_{\U_n^u}(B \cap \underbrace{V_n^{v_n} \cap \V_{n-1}^{v}}_{=\V_n^v}).
	\end{align*}
	Furthermore, for all $n$ and $A \times B \in \Gx_n * \Gy_n$ we have
	\begin{align*}
	    G(Z^{n-1} \times (A \times B) \times Z^\infty) 
	    &= \sum_{u,v \in \{-1,1\}^{n}} P(A \cap \U_n^{u}) Q_{\U_n^{u}}(B \cap \V_n^{v}) 
	    \\&= P(A) \sum_{u \in \{-1,1\}^{n}} \underbrace{P(\U_n^{u} \mid A)}_{=\1[\U_n^u \subset A]} \underbrace{\sum_{v \in \{-1,1\}^{n}}Q_{\U_n^{u}}(B \cap \V_n^{v})}_{=Q_{\U_n^u}(B)}
	    \\&= P(A)Q_A(B)
	\end{align*}
	where the last equality is condition 2. Similarly
	\begin{align*}
		K_{n+1}(A \times B \mid (x,y) \; ) = \1[x \in A, y \in B]
	\end{align*}
	from which we can conclude that for all $m \geq 1$ we have
	\begin{align}
		&G(Z^{n-1} \times (A \times B)^m \times (A \times B)^C \times Z^\infty ) \label{eq:appendix:AxAceq0}
		\\&\qquad\leq\quad G(Z^{n-1+m-1} \times (A \times B) \times (A \times B)^C \times Z^\infty) = 0. \nonumber
	\end{align}
	And by adding this in, we get
	\begin{align*}
		&G(Z^{n-1} \times (A \times B) \times Z^\infty) 
		\\&\qquad=\quad G(Z^{n-1} \times (A \times B) \times (A \times B) \times Z^\infty)
		\\&\qquad\quad+ \underbrace{G(Z^{n-1} \times (A \times B) \times (A \times B)^C \times Z^\infty)}_{= 0}
		\\&\qquad=\quad G(Z^{n-1} \times (A \times B)^2 \times Z^\infty)
		\\&\qquad\:\:\vdots
		\\&\qquad=\quad G(Z^{n-1} \times (A \times B)^m \times Z^\infty)
		\\&\qquad=\quad G(Z^{n-1} \times (A \times B)^\infty)
		\\&\qquad=\quad G(Z^{n-1} \times (A \times B) \times (A \times B)^\infty) \\&\qquad\quad+ \underbrace{G(Z^{n-1} \times (A \times B)^C \times (A \times B)^\infty)}_{= 0}
		\\&\qquad=\quad G(Z^{n} \times (A \times B)^\infty) 
		\\&\qquad\:\:\vdots
		\\&\qquad=\quad G(Z^{n-1+m-1} \times (A \times B)^\infty)
	\end{align*}
	where the forth equality follows from the fact that $G$ is a measure and the sequence of sets is monotonously decreasing. So in particular, we have
	\begin{align*}
		G(Z^{n-1+m-1} \times (A \times B)^\infty) = P(A)Q_A(B).
	\end{align*}
	
	Therefore, the marginal distribution of $G$ at infinity is indeed the measure $H$. We now need to extract the measure from the sequence. To do so, we will show that $\lim_{n \to \infty} G(Z^n \times C^\infty)$ is a pre-measure and then extend it. 
	
	Define $G_n(C) = G(Z^{n-1} \times C^\infty), \; G_\infty(C) = \lim_{n \to \infty}G_n(C)$ which exists as the limit of an increasing bounded sequence and for $A \in \Gx_n, B\in \Gy_n$ we have $\Gi(A \times B) = G_n(A \times B) = P(A)Q_A(B)$. We have to show that $\Gi$ is a measure.
	
	Denote by $\Ring_n$ and $\Ring$ the ring over $\Gx_n*\Gy_n$ or $\Gx * \Gy$, respectively. Every set $C \in \Ring$ is a finite disjoint union $C = C_1 \sqcup \dots \sqcup C_n$ with $C_i \in \Gx * \Gy$. Because $\Gx_n$ and $\Gy_n$ are monotonous, for every $A \times B \in \Gx * \Gy = \{A \times B \mid A \in \cup_{n=1}^\infty \Gx_n, B \in \cup_{n=1}^\infty \Gy_n\}$ there exists an $n$ such that $A \in \Gx_n$ and $B \in \Gy_n$ and therefore $A \times B \in \Gx_n * \Gy_n$.
	Hence, there is an $n$ such that $C_i \in \Gx_n*\Gy_n$ for all $i$ and thus $C \in \Ring_n$.
	
	For any disjoint sets $C_1,\dots,C_k$ set $C = C_1 \cup \dots \cup C_k$ then we have
	\begin{align*}
		(C_1 \cup \dots \cup C_k)^\infty
		&= \bigcup_{j = 1}^k \bigcup_{i=0}^\infty C_j^{(i)} & \text{where} \\
		C_j^{(i)} &= \begin{cases}
			C_j^\infty & \text{ if } i = 0\\
			C_j^i \times (C \setminus C_j) \times C^\infty & \text{otherwise}
		\end{cases}
	\end{align*}
	As $C_j^{(i)} \subset C_j^i \times (C \setminus C_j) \times Z^\infty$ for all $C_1,\dots,C_k \in \Gx_n * \Gy_n$ we have 
	\begin{align*}
		G(Z^{n-1} \times C_j^{(i)}) \leq G(Z^{n-1} \times C_j^i \times (C \setminus C_j) \times Z^\infty) = 0
	\end{align*}
	by \cref{eq:appendix:AxAceq0}. As we can write all $C_i \in \Ring_n$ as $C_{i1} \cup \dots \cup C_{it},\; C_{ij} \in \Gx_n*\Gy_n$ disjoint the statement extends to all $C_i \in \Ring_n$. Therefore, we have
	\begin{align*}
		G_{n}\left( \bigcup_{j = 1}^k C_j \right) &= 
		G(Z^{n-1} \times (C_1 \cup \dots \cup C_k)^\infty) 
		\\&= \sum_{j = 1}^k \underbrace{G(Z^{n-1} \times C_j^{(0)})}_{= G_n(C_j)} +  \sum_{j = 1}^k\sum_{i = 1}^\infty \underbrace{G(Z^{n-1} \times C_j^{(i)})}_{= 0} 
		\\&= \sum_{j = 1}^k G_n(C_j)
	\end{align*}

	Thus, $G_n$ and hence also $G_\infty$ are finitely additive. 
	
	Thus $\sigma$-additivity is equivalent to show continuity from below: for $C_1 \subset C_2 \subset \dots \nearrow C, \; C,C_1,C_2,\dots \in \Ring$ consider
	\begin{align*}
		\lim_{m \to \infty} \Gi(C_m) 
		&= \lim_{m \to \infty} \lim_{n\to \infty} G_n(C_m)
		\\&\overset{!^1}{=} \lim_{n\to \infty}\lim_{m \to \infty} G_n(C_m)
		\\&\overset{!^2}{=} \lim_{n\to \infty} G_n(C)
		\\&\overset{!^3}{=} \Gi(C),
	\end{align*}
	where 
	$!^1$ holds because $Z^n \times C_m^\infty \subset Z^{n+1} \times C_m^\infty$ so the double sequence $G_n(C_m)$ is increasing in both arguments $m$ and $n$ and therefore the limits commute,
	$!^2$ holds because $Z^n \times C_1^\infty \subset Z^n \times C_2^\infty \subset \dots \nearrow Z^n \times C^\infty$ for each $n$,
	and
	$!^3$ holds since as above $C$ is contained in some $\Ring_n$.
	
	Hence, as $G_n(\emptyset) = 0$ for all $n$ we see that $G_\infty$ is a pre-measure, indeed.
	By Carathéodory's extension theorem, we have that $\Gi$ extends uniquely to a measure on $\sigma(\Gx * \Gy) = \Sigma_X \otimes \Sigma_Y$. 
\end{proof}

Furthermore, we have the following quite helpful lemma:
\begin{lemma}[Carathéodory's extension theorem for WS with finite horizon]
\label{lem:extension}
Let $(\W,\Null)$ a WS on $(\T,\Sigma_\T)$ with finite horizon (and $(\X,\Sigma_\X)$ a measurable space). Then $\sigma(\W\cup\Null) = \Sigma_\T$. 

In particular, for every finite pre-measure $\mu$ on $\W \cup \Null$ ($\Sigma_\X * (\W \cup \Null)$) the outer measure $\mu^*$ is a measure on $\Sigma_\T$ ($\Sigma_\X \otimes \Sigma_\T$). This extension is unique, i.e., for every measure $\nu$ on $\Sigma_\T$ ($\Sigma_\X \otimes \Sigma_\T$) with $\nu_{|\W \cup \Null} = \mu$ ($\nu_{|\Sigma_\X * (\W \cup \Null)} = \nu$) it holds $\mu^*(S) = \nu(S)$ for all $S \in \Sigma_\T$ ($S \in \Sigma_\X \otimes \Sigma_\T$). 

Furthermore, if $\mu(N) = 0$ ($\mu(\X \times N) = 0$) for all $N \in \Null$ then the total value is determined on the finite horizon, i.e., for $W \in \W$ with $\T \setminus W \in \Null$ we have $\mu(W) = \mu^*(\T)$ ($\mu(\X \times W) = \mu^*(\X \times \T)$).
\end{lemma}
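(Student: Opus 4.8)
The statement decomposes into three claims, and the plan is to establish them in order: the generation identity $\sigma(\W\cup\Null)=\Sigma_\T$, the Carathéodory extension (existence and uniqueness of $\mu^*$), and the finite-horizon total-value formula. The whole argument is a packaging of standard measure-theoretic machinery once the first claim is in place, so the finite-horizon hypothesis must be made to do the decisive work early.

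First I would prove $\sigma(\W\cup\Null)=\Sigma_\T$. The inclusion ``$\subseteq$'' is trivial since $\W,\Null\subset\Sigma_\T$. For ``$\supseteq$'', fix a window $W_0\in\W$ witnessing the finite horizon, so $\T\setminus W_0\in\Null$. Given arbitrary $S\in\Sigma_\T$, split $S=(S\cap W_0)\cup(S\setminus W_0)$. By axiom~3 (local generation) the trace $S\cap W_0\in(\Sigma_\T)_{|W_0}\subseteq\sigma(\W\cup\Null)$, while $S\setminus W_0$ is a measurable subset of the null window $\T\setminus W_0$, hence lies in the $\sigma$-ideal $\Null\subseteq\sigma(\W\cup\Null)$. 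As a union of two members of $\sigma(\W\cup\Null)$, $S$ belongs to it as well.

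Next I would handle the extension. By axiom~2, $\W\cup\Null$ is a semi-ring which, by the previous step, generates $\Sigma_\T$; since semi-rings are intersection-stable, it is a $\pi$-system. Invoking the semi-ring form of Carathéodory's theorem, the outer measure $\mu^*$ induced by the finite pre-measure $\mu$ is a measure on the $\sigma$-algebra of Carathéodory-measurable sets, which contains $\sigma(\W\cup\Null)=\Sigma_\T$; restricting gives a measure on $\Sigma_\T$ agreeing with $\mu$ on $\W\cup\Null$. Finiteness of $\mu$ makes it $\sigma$-finite, so the $\pi$-system uniqueness lemma forces any competing $\nu$ to coincide with $\mu^*$ on all of $\Sigma_\T$. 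For the product version I would observe that $\Sigma_\X*(\W\cup\Null)$ is again a semi-ring (a product of a $\sigma$-algebra and a semi-ring) generating $\Sigma_\X\otimes\sigma(\W\cup\Null)=\Sigma_\X\otimes\Sigma_\T$, after which the identical argument applies verbatim. Finally, under $\mu(N)=0$ for all $N\in\Null$, pick $W\in\W$ with $\T\setminus W\in\Null$; additivity of $\mu^*$ yields $\mu^*(\T)=\mu^*(W)+\mu^*(\T\setminus W)$, and since both pieces lie in $\W\cup\Null$ we get $\mu^*(W)=\mu(W)$ and $\mu^*(\T\setminus W)=\mu(\T\setminus W)=0$, so $\mu^*(\T)=\mu(W)$; the product statement follows by replacing each set with its product with $\X$.

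I expect the main obstacle to be the first step, $\sigma(\W\cup\Null)=\Sigma_\T$: local generation alone only controls the trace $\sigma$-algebra on each individual window and does not by itself reach the global $\sigma$-algebra. The finite-horizon hypothesis is exactly what allows a single window to absorb all of $\T$ up to a null window, upgrading the trace-level statement to a global one; without it the identity can genuinely fail. The remaining ingredients---recognizing the semi-ring structure, confirming the product set system still generates $\Sigma_\X\otimes\Sigma_\T$, and the finite-additivity computation---are routine bookkeeping on top of the standard Carathéodory and $\pi$-system uniqueness results.
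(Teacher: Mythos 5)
Your proposal is correct and follows essentially the same route as the paper's proof: the decisive step is identical (split any $S\in\Sigma_\T$ along a finite-horizon window $W_0$ into a trace piece handled by the local-generation axiom and a piece inside the null window $\T\setminus W_0$ handled by the $\sigma$-ideal property), after which the extension is standard Carathéodory on the semi-ring $\W\cup\Null$ and the total-value formula is the same two-term additivity computation. You merely spell out details the paper compresses (the $\pi$-system uniqueness argument and the product-semi-ring observation), which is harmless.
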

\begin{proof}
If $(\W,\Null)$ has a finite horizon, then there is a $W \in \W$ with $\T \setminus W \in \Null$. Thus, for $S \in \Sigma_\T$ we have $S = (S \cap W) \cup (S \setminus W)$ and $S \cap W \in \sigma(\W \cup \Null)$ and $S \setminus W \subset \T \setminus W \in \Null$ so $S \setminus W \in \Null$. Hence $S \in \sigma(\W \cup \Null)$.

The extension statement is just Carathéodory's extension theorem.

For the last part take $\mu(W) = \mu^*(W) + \mu^*(\T \setminus W) = \mu^*(\T)$ as $\T \setminus W \in \Null$ and thus $\mu^*(\T \setminus W) = 0$ by assumption. The same holds true when taking the product with $\X$.
\end{proof}
Clearly, this lemma can be extended to WS with non-finite horizons. However, as already discussed in the paper, this is beyond the scope of this paper and will not be covered here.

Using this we can now prove the following lemma that directly implies the proposition:
\begin{lemma}
\label{lem:reconstruct_HD}
Let $(\W,\Null,\D_W)$ be a WDS on finite horizons and $\PT$ a compatible time distribution, then there exists a unique distribution $\D \in \Pr(\X \times \T)$ such that $i(\D)$ is an extension of $\D_W$ and has $\PT$ as time marginal, i.e., $\PT(W) = \D(\X \times W)$. 
\end{lemma}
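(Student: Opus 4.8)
The plan is to invoke \cref{lem:elementary_fubini} with time in the role of the conditioning space: set $X = \T$, $Y = \X$, let $P = \PT$, and use the window means $\D_W$ as the conditional probabilities. The lemma then produces a measure $H$ on $\T \times \X$ with $H(S \times \X) = \PT(S)$ and $\D_W(A) = H(W \times A \mid W \times \X)$; transposing coordinates gives the desired $\D$ on $\X \times \T$. The snag is that \cref{lem:elementary_fubini} wants the conditionals indexed by a family $\mathcal F$ with $\mathcal F \cup \Null_{\PT}$ an algebra, where $\Null_{\PT} = \{S : \PT(S) = 0\}$, whereas we only have the $\D_W$ on $\W$. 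First I would therefore pass to the ring $\mathcal R$ generated by the semi-ring $\W \cup \Null$; by finite horizon $\T = W_0 \sqcup (\T \setminus W_0) \in \mathcal R$, so $\mathcal R$ is an algebra, and I would take $\mathcal F \cup \Null_{\PT}$ to be its completion with respect to $\PT$-null sets. Every $R \in \mathcal R$ is a finite disjoint union of semi-ring elements, hence of the form $W \sqcup N$ with $W \in \W$ (closed under finite disjoint unions) and $N \in \Null$; for $A$ agreeing with such an $R$ up to a $\PT$-null set I set $Q_A := \D_W$. Here \cref{lem:extension} gives $\sigma(\W \cup \Null) = \Sigma_\T$, which is what lets $\mathcal F \cup \Null_{\PT}$ contain a generator (the single point where I would lean on the base spaces being countably generated, as in \cref{thm:main}).

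Well-definedness of $Q$ and conditions 1--3 of \cref{lem:elementary_fubini} rest on one small but essential sublemma: if $W_1, W_2 \in \W$ with $\PT(W_1 \triangle W_2) = 0$, then $\D_{W_1} = \D_{W_2}$. I would prove it by intersecting: $W_1 \cap W_2 \in \W \cup \Null$ by the semi-ring axiom and has positive $\PT$-measure (equal to $\PT(W_1)$), so it lies in $\W$; moreover $W_1 \setminus W_2$ is a finite disjoint union of semi-ring elements, each $\PT$-null and hence in $\Null$ (within $\W \cup \Null$ a $\PT$-null set cannot be a window, by condition~1 of \cref{def:comp_PT}). Thus $W_1 \triangle (W_1 \cap W_2) \in \Null$ and condition~1 of \cref{def:WDS} gives $\D_{W_1} = \D_{W_1 \cap W_2} = \D_{W_2}$. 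This yields well-definedness and condition~3 at once. For condition~2 I would take disjoint $A_i$ with union $A$, choose disjoint window representatives $W_i$ (replacing $W_i$ by $W_i \cap W$ and discarding $\PT$-null overlaps), and read off the claim from condition~3 of \cref{def:comp_PT}, since $P(A_i \mid A) = \PT(W_i)/\PT(W)$.

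The main obstacle is condition~4, continuity from below, because $\W$ is not assumed closed under countable unions, so I cannot realize $A_n \nearrow A$ by windows increasing to a window. I would bypass this exactly as in the proof of \cref{prop:induced_WDS}, through a Lipschitz estimate derived from compatibility rather than from condition~2 of \cref{def:WDS}. Writing $W$ for a window representing $A$ and $W_n = W \cap (\text{representative of } A_n) \in \W$, so that $W_n \subseteq W$ represents $A_n$, I decompose $W = W_n \sqcup V \sqcup M$ with $V \in \W$, $M \in \Null$, and apply condition~3 of \cref{def:comp_PT} to get $\D_W = (\PT(W_n)\D_{W_n} + \PT(V)\D_V)/\PT(W)$, whence
\[
  |\D_{W_n}(B) - \D_W(B)| \;\le\; \frac{\PT(W \setminus W_n)}{\PT(W_n)} \qquad \text{for all } B \in \Sigma_\X .
\]
Since $A_n \nearrow A$ forces $\PT(W \setminus W_n) = \PT(A) - \PT(A_n) \to 0$ while $\PT(W_n) \to \PT(W) > 0$, condition~4 follows.

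With all four conditions verified, \cref{lem:elementary_fubini} delivers a unique $H$, and I transpose to obtain $\D$. Then $\D(\X \times W) = P(W) = \PT(W)$ identifies the time marginal; $\D(A \times W \mid \X \times W) = Q_W(A) = \D_W(A)$ together with conditions~1--2 of \cref{def:comp_PT} (which give $\W \subseteq \{\PT > 0\}$ and $\Null \subseteq \{\PT = 0\}$) shows that $i(\D)$ extends $\D_W$; and $\D$ is a probability measure because $P = \PT$ is. Uniqueness is immediate: any two solutions agree on the intersection-stable generator $\Sigma_\X * (\W \cup \Null)$ of $\Sigma_\X \otimes \Sigma_\T$, taking value $\PT(W)\D_W(A)$ on $A \times W$ and $0$ on $A \times N$, so they coincide, which is also exactly the uniqueness furnished by \cref{lem:extension}.
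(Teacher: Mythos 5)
Your core ingredient is the right one---\cref{lem:elementary_fubini} is exactly what the paper's proof uses---but the way you deploy it opens a genuine gap. You apply the Fubini lemma \emph{globally}, with $X = \T$, $Y = \X$, $P = \PT$, and the full $\sigma$-algebras $\Sigma_\T$, $\Sigma_\X$. That lemma, however, is stated (and proved, via countable generators and Ionescu--Tulcea) only for \emph{countably generated} $\sigma$-algebras, and its condition~1 demands that $\mathcal{F} \cup \Null_{\PT}$ contain a countable generator of $\Sigma_X$. You flag this yourself and propose to ``lean on the base spaces being countably generated, as in \cref{thm:main},'' but that assumption is not available here: \cref{lem:reconstruct_HD}, and \cref{prop:premain} which it serves, are stated for arbitrary measure spaces; the standard-Borel hypothesis enters only at \cref{thm:main}, and for a different purpose (recovering the Markov kernel $\D_t$ from the holistic distribution, i.e., disintegration). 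So as written your argument proves a strictly weaker statement, and nothing in your construction of $\mathcal{F}$ or $Q$ repairs this: if $\Sigma_\X$ or $\Sigma_\T$ fails to be countably generated, the global invocation of \cref{lem:elementary_fubini} is simply unavailable.

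The paper avoids this by inverting the architecture. It first defines a candidate set function directly on the semi-ring $\mathcal{R} = \Sigma_\X * (\W \cup \Null)$ by $\mu(A \times S) = \PT(S)\,\D_S(A)$ for $S \in \W$ and $\mu(A \times S) = 0$ for $S \in \Null$, and uses \cref{lem:elementary_fubini} only to verify that $\mu$ is $\sigma$-additive. That verification concerns one fixed \emph{countable} disjoint family $A_1 \times W_1, A_2 \times W_2, \dots$ with union $A \times W \in \mathcal{R}$, so the Fubini lemma is applied on the sub-$\sigma$-algebras $\sigma(\{W_1,W_2,\dots\})$ and $\sigma(\{A_1,A_2,\dots\})$, which are countably generated by construction---no hypothesis on $\Sigma_\X$ or $\Sigma_\T$ is needed. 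Carath\'eodory's theorem in the form of \cref{lem:extension} then extends $\mu$ uniquely to $\Sigma_\X \otimes \Sigma_\T$ and finishes the proof. Your supporting steps---the sublemma that $\PT(W_1 \triangle W_2) = 0$ forces $\D_{W_1} = \D_{W_2}$, the verification of the decomposition condition from compatibility, and the Lipschitz-type bound giving continuity from below---are correct and would slot into that local application essentially unchanged; it is only the global invocation that must be replaced by the pre-measure-plus-extension route.
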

\begin{proof}
\newcommand{\Ring}{\mathcal{R}}
Let $\Ring = \Sigma_\X * (\W \cup \Null)$. Obviously, $\Ring \subset \Sigma_\X \otimes \Sigma_\T$ and as $\W \cup \Null$ is a semi-ring, so is $\Ring$. 
For $A \times S \in \Ring$ define
\begin{align*}
	\mu(A \times S) &= \begin{cases}
		\PT(S)\D_S(A) & \text{ if } S \in \W \\
		0 & \text{ otherwise } 
	\end{cases}.
\end{align*}
We claim that $\mu$ is a pre-measure on $\Ring$.

Obviously, $0 \leq \mu(\emptyset) = \mu(\X \times \emptyset) = 0$ as $\emptyset \in \Null$. For $A_1 \times W_1,A_2 \times W_2,\dots \in \Ring$ pairwise disjoint with $A \times W = \cup_{i = 1}^\infty A_i \times W_i \in \Ring$ observe that we can apply \cref{lem:elementary_fubini} to $X = (W,\sigma(\{W_1,W_2,\dots\}))$, $Y = (\X, \sigma(\{A_1,A_2,\dots\}))$, $\mathcal{F} = \{W' \cup N' \mid W' \in \W, N' \in \Null\} \cap \sigma(\{W_1,W_2,\dots\})$ (as a $\sigma$-algebra on $W$), $P = \PT$, and with $Q_{W\cup N} = \D_W,\; W\in\W,N\in\Null$ where the latter is well-defined as $Q_W$ and $\D_W$ are invariant under changes on null windows. Thus, there is a measure $H$ such that $\mu(A' \times W') = H(A' \times W')$ for all $A' \in \sigma(\{A_1,A_2,\dots\})$ and $W' \in \mathcal{F}$, so in particular, $W' \in \{W,W_1,W_2,\dots\}$. Therefore, we have
\begin{align*}
	\mu\left( \bigcup_{i = 1}^\infty A_i \times W_i \right) 
	&= H\left( \bigcup_{i = 1}^\infty A_i \times W_i \right)
	\\&= \sum_{i = 1}^\infty H\left(  A_i \times W_i \right)
	\\&= \sum_{i = 1}^\infty \mu\left( A_i \times W_i \right),
\end{align*}
so $\mu$ is $\sigma$-additive.
As $(\W,\Null)$ has a finite horizon and $\mu(A \times N) = 0$ for all $N \in \Null$ by \cref{lem:extension}, the outer measure $\mu^*$ is the unique measure that extends $\mu$ and is a probability measure.
Therefore, $\D = \mu^*$ is the only probability measure with $\D(\X \times W) = \PT(W)$ and $\D(A \times W \mid \X \times W) = \D_W(A)$. 

Now let $(\bar\W,\bar\Null,\bar\D) = i(\D)$. For $S \in \W$ we have $\PT(S) > 0$ and conversely $S \in \bar\W$ if and only if $0 < \D(\X \times S) = \PT(S)$ so $\W \subset \bar\W$. Analogous, we see $\Null \subset \bar\Null$. Furthermore, $\bar\D_W(A) = \D(A \times W \mid \X \times W) = \D_W(A)$ so $i(\D)$ is an extension of $(\W,\Null,\D_W)$.
\end{proof}

\begin{proof}[\cref{prop:premain}]
Observe that for $\D \in \Pr(\X \times \T)$ we have that $i(\D)$ and $\PT(W) = \D(\X \times W)$ are compatible. Thus, by \cref{lem:reconstruct_HD}, there is a unique distribution $\D' \in \Pr(\X \times \T)$ such that $i(\D) = i(\D')$ and therefore $\D = \D'$. The statement about the right concatenation is also already shown in the lemma.
\end{proof}

\subsection{Proof of \cref{prop:exist_PT}}
The proof of \cref{prop:exist_PT} is somewhat lengthy. For clarity we subdivided into a sequence of lemmas:

\begin{lemma}
\label{lem:prop_exist:help1}
Let $(\W,\Null,\D_W)$ be a non-constant WDS with $|\W| < \infty$. Fixate $W_* \in \W$. Then there exists a unique measure $\PT$ on $\T$ that is compatible with $\D_W$ and $\PT(W_*) = 1$.

If $(\tilde\W,\tilde\Null,\tilde\D_W)$ is another non-constant WDS on a measure space $(\T,\mathcal{S})$ with fewer measurable sets, i.e., $\mathcal{S} \subset \Sigma_\T$, with $W_* \in \tilde\W\subset\W$, $\tilde\Null \subset \Null$, $\tilde\D_W = \D_W$ for all $W \in \tilde\W$, then the obtained $\tilde\PT$ is the restriction of $\PT$ to $\mathcal{S}$, i.e., $\tilde\PT = (\PT)_{|\mathcal{S}}$.
\end{lemma}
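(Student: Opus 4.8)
The plan is to reduce the problem to a finite family of ``atomic'' windows and to read off the measure from the compatibility weights of \cref{def:WDS}. Since $\W$ is finite, the non-empty intersections formed from the windows of $\W$ and their complements are finitely many; decomposing each of these with the semi-ring property of $\W \cup \Null$ and discarding the null pieces produces finitely many pairwise disjoint non-null windows $B_1,\dots,B_r \in \W$ such that every $W \in \W$ equals $\bigcup_{j \in J_W} B_j$ up to a null window. As $\bigcup_j B_j \in \W$ by the first window-system axiom and its complement is a finite union of null windows, the system has a finite horizon, so \cref{lem:extension} will be available for the final extension step.

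Next I would apply the compatibility axiom (condition~\ref{def:WDS:comp} of \cref{def:WDS}) to the single disjoint family $B_1,\dots,B_r$, obtaining weights $\mu_1,\dots,\mu_r > 0$ with $\D_{B_I} = \bigl(\sum_{j\in I}\mu_j\D_{B_j}\bigr)\big/\bigl(\sum_{j\in I}\mu_j\bigr)$ for every $I \subseteq \{1,\dots,r\}$. I then set $\PT(B_j) = c\,\mu_j$ with $c>0$ chosen so that $\PT(W_*) = c\sum_{j\in J_{W_*}}\mu_j = 1$, assign mass zero to null windows, and extend to $\Sigma_\T$ via \cref{lem:extension}. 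Verifying compatibility (\cref{def:comp_PT}) is then direct: positivity and finiteness hold because each window is a non-empty finite union of atoms of positive weight; vanishing on $\Null$ comes from the premeasure together with \cref{lem:extension}; and condition~3 for arbitrary disjoint $W_1,\dots,W_n$ reduces, after replacing each $W_i$ by its atom decomposition, to the single family identity for the $\mu_j$, since the same weight vector governs every sub-union.

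For uniqueness, suppose $\PT'$ is compatible with $\PT'(W_*)=1$. Applying condition~3 of \cref{def:comp_PT} to every subfamily $\{B_j : j \in I\}$ shows that $(\PT'(B_j))_j$ is itself a valid weight vector for the atoms. The crux is to show that such a weight vector is unique up to a global scalar: for a pair with $\D_{B_i}\neq\D_{B_j}$, comparing the two-element convex combination forces the ratio $\mu_i:\mu_j$, while for a pair with $\D_{B_i}=\D_{B_j}$ one routes the comparison through a third atom carrying a different distribution. Such a third atom exists precisely because the WDS is \emph{non-constant}: if all $\D_{B_j}$ agreed, the compatibility identity would make every $\D_W$ equal to their common value, contradicting non-constancy. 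Hence all ratios are pinned down, $\PT'(B_j)=c'\mu_j$, the normalization forces $c'=c$, and agreement on atoms and on null windows yields $\PT'=\PT$ on $\Sigma_\T$ by the uniqueness in \cref{lem:extension}. I expect this ``connectivity through a third atom'', together with checking that non-constancy descends to the atoms, to be the main obstacle, as it is exactly where the hypothesis is indispensable.

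Finally, for the restriction statement I would verify that $(\PT)_{|\mathcal S}$ is a measure on $(\T,\mathcal S)$ compatible with $\tilde\D_W$ --- it inherits all three conditions from $\PT$ because $\tilde\W\subset\W$, $\tilde\Null\subset\Null$, and $\tilde\D_W=\D_W$ --- and that $(\PT)_{|\mathcal S}(W_*)=1$. Since $\tilde\W$ is again finite and the coarser WDS is non-constant, the uniqueness already established, applied to it, gives $\tilde\PT=(\PT)_{|\mathcal S}$.
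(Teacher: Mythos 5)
Your proposal is correct and follows essentially the same route as the paper's proof: decompose $\W$ into finitely many pairwise disjoint non-null atomic windows, read off the weights from condition~\ref{def:WDS:comp} of \cref{def:WDS}, normalize, extend via \cref{lem:extension}, and obtain uniqueness by forcing the weight ratios pairwise --- routing through a third atom with a distinct distribution when two atoms carry the same distribution, which is exactly where the paper also invokes non-constancy. The only differences are cosmetic: you construct the atoms explicitly (the paper merely asserts their existence) and you derive the restriction statement from uniqueness rather than from the paper's remark that the same $\lambda_i$ may be reused.
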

\begin{proof}
\newcommand{\Ring}{\mathcal{R}}
Observe that there exists a finite set of pairwise disjoint $W_1,\dots,W_n \in \W$ such that $\Null \cup \{W_1,\dots,W_n\}$ is a semi-ring $\Ring$ with $\sigma(\Ring) = \sigma(\Null \cup \W)$. Since $W \setminus (W_1 \cup \dots \cup W_n) \in \Null$ for all $W \in \W$ we have that $(\W,\Null)$ has a finite horizon and thus $\sigma(\Ring) = \Sigma_\T$ by \cref{lem:extension}.

By definition there are $\lambda_1,\dots,\lambda_n$ such that for every $I \subset \{1,\dots,n\}$ we have
\begin{align*}
	\D_{W_I} = \frac{\sum_{i \in I} \lambda_i \D_{W_i}}{\sum_{i \in I} \lambda_i}
\end{align*}
where $W_I = \cup_{i \in I} W_i$.

As $\D_W$ is not constant w.l.o.g. w.m.a. $\D_{W_1} \neq \D_{W_2}$. 
Thus, there exists a unique $\lambda$ such that $\D_{W_1 \cup W_2} = \lambda\D_{W_1} + (1-\lambda)\D_{W_2}$ and as $\lambda_2/(\lambda_1+\lambda_2) = 1-\lambda$ every choice of $\lambda_1$ uniquely determines $\lambda_2$.

Furthermore, for every $i$ we have $\D_{W_i} \neq \D_{W_1}$ or $\D_{W_i} \neq \D_{W_2}$ then by the same argument as before we see that $\lambda_i / (\lambda_1+\lambda_i)$ or $\lambda_i / (\lambda_2+\lambda_i)$ is uniquely determined and therefore $\lambda_i$ is uniquely determined by $\lambda_1$. Hence, $\lambda_1$ determines the value of $\lambda_2,\dots,\lambda_n$ uniquely.

Now, set 
$\PT(S) = \sum_{i = 1}^n \lambda_i \1[W_i \setminus S \in \Null]$. 
Obviously, $\PT(S) = \PT(S\cup N)$ for all $N \in \Null$ and therefore $\PT(S_1) = \PT(S_2)$ if $S_1 \triangle S_2 \in \Null$. Hence, as every $W \in \W$ is up to a set in $\Null$ a disjoint union of $W_1,\dots,W_n$ we see that $\PT$ is finitely additive, and as $n$ is finite it is thus a pre-measure.
Therefore, by \cref{lem:extension}, it extends uniquely to a measure on $\Sigma_\T$. 
Normalizing $\PT$ to assure $\PT(W_*) = 1$ completes the construction.

For the the extension property use that we may choose the same $\lambda_i$ as before. As those are uniquely determined up to scaling the statement follows.
\end{proof}

\begin{lemma}
\label{lem:prop_exist:help2}
Let $(\W,\Null,\D_W)$ be a non-constant WDS with finite horizon. Any finite content $\PT$ on $\W \cup \Null$ that is compatible with $\D_W$, i.e., fulfills conditions 1.-3. of \cref{def:comp_PT}, extends uniquely to a compatible measure.
\end{lemma}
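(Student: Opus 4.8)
The plan is to upgrade the finitely additive content $\PT$ to a genuine measure by invoking Carathéodory's extension theorem in the form of \cref{lem:extension}. Since $(\W,\Null)$ has a finite horizon, that lemma already gives $\sigma(\W \cup \Null) = \Sigma_\T$ and guarantees that any \emph{pre-measure} on the semi-ring $\W \cup \Null$ extends uniquely to a measure on $\Sigma_\T$. Because conditions 1--3 of \cref{def:comp_PT} constrain only the values of $\PT$ on windows and null windows, the extension automatically inherits compatibility, and uniqueness of the compatible measure follows from the uniqueness part of \cref{lem:extension}. Hence the whole statement reduces to one point: the given compatible content is in fact $\sigma$-additive. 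As it is finite (finite horizon), this is equivalent to continuity from below, so it suffices to show $\PT(V_N) \to \PT(W)$ whenever $V_N \in \W$ with $V_1 \subset V_2 \subset \dots$ and $\bigcup_N V_N \vartriangle W \in \Null$ for some $W \in \W$.

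The engine is the continuity condition (axiom 2) of \cref{def:WDS} together with non-constancy, and the key trick is to probe the lost mass $b := \PT(W) - \lim_N \PT(V_N) \ge 0$ with a disjoint window carrying a different distribution. Suppose first that there is a window $U$ disjoint from $W$ with $\D_U \neq \D_W$. Then $V_N \cup U \nearrow W \cup U$, so axiom 2 yields $\D_{V_N} \to \D_W$ and $\D_{V_N \cup U} \to \D_{W \cup U}$. Writing out compatibility (\cref{def:comp_PT}, condition 3) for the disjoint unions $V_N \sqcup U$ and $W \sqcup U$, evaluating at an arbitrary $A \in \Sigma_\X$ and passing to the limit, the two resulting identities differ exactly by $b\,\D_{W \cup U}(A) = b\,\D_W(A)$. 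If $b > 0$ this forces $\D_{W \cup U} = \D_W$; but compatibility simultaneously exhibits $\D_{W \cup U}$ as the \emph{proper} convex combination $(\PT(W)\D_W + \PT(U)\D_U)/(\PT(W) + \PT(U))$ of the distinct measures $\D_W \neq \D_U$ with strictly positive weights, whence $\D_{W \cup U} \neq \D_W$ --- a contradiction. Thus $b = 0$.

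It remains to always secure such a disjoint differing window, which is the main subtlety (note that for $W$ equal to the finite horizon no window is disjoint from it, so this cannot be assumed). If no window disjoint from $W$ has a distribution different from $\D_W$, I would instead use non-constancy to locate a sub-window: some non-constancy witness has distribution $\neq \D_W$, and pushing it through compatibility yields a window $W'' \subset W$ with $\PT(W'') > 0$ and $\D_{W''} \neq \D_W$; then $W''$ and $W \setminus W''$ are disjoint windows with distinct distributions ($\D_{W''} \neq \D_{W \setminus W''}$, since $\D_W$ lies strictly between them). Splitting the exhaustion as $V_N = (V_N \cap W'') \sqcup (V_N \setminus W'')$, the previous paragraph applies to the target $W''$ with partner $U = W \setminus W''$ and to the target $W \setminus W''$ with partner $U = W''$; adding the two limits gives $\PT(V_N) \to \PT(W)$.

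The conceptual obstacle --- and the reason non-constancy is indispensable --- is precisely that a compatible content could a priori carry a purely finitely additive charge ``at infinity'': the continuity axiom alone only forces such a charge to be \emph{shaped like} the ambient window mean $\D_W$, which is perfectly consistent for a constant WDS (indeed \cref{prop:exist_PT} fails there). Non-constancy is what lets the disjoint probe $U$ detect the discrepancy and rule the charge out. The crux is to expose this through the single clean identity $b\,\D_{W \cup U} = b\,\D_W$, rather than through a delicate localization of the charge.
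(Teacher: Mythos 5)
Your proof is correct and, beyond the shared opening move, takes a genuinely different route from the paper's. Both proofs first invoke \cref{lem:extension} to reduce the whole statement to $\sigma$-additivity of the given content (the paper does exactly this, and likewise gets uniqueness and automatic compatibility of the extension for free), and both ultimately rest on axiom 2 of \cref{def:WDS} together with the strict positivity of the compatibility weights. From there the paper argues by perturbation: using non-constancy it alters the disjoint sequence at finitely many places so that the union distribution changes, shows that a failure of $\sigma$-additivity would force each union distribution to equal the limit of its tail distributions, and derives a contradiction because the two sequences have identical tails. You argue by probing: cancelling the compatibility identity for $W \sqcup U$ against the limit of the identities for $V_N \sqcup U$ isolates the deficit in the single relation $b\,\D_{W\cup U} = b\,\D_W$, which the properly weighted convex combination with $\D_U \neq \D_W$ forbids unless $b = 0$; and when no disjoint probe exists you manufacture one inside $W$ by splitting off a sub-window of different distribution via non-constancy, compatibility, and the case hypothesis. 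Your route localizes the role of non-constancy more cleanly and avoids the most delicate (and in the paper only sketched) step, namely the construction of the perturbed sequence $\tilde W_i$; the paper's route avoids your case distinction and handles the sequence in one stroke. One repair you should make: axiom 2 of \cref{def:WDS} applies only when the limit of the exhaustion is literally a window, whereas your formulation allows $\bigcup_N V_N \vartriangle W \in \Null$; since $\W$ is not assumed closed under removal of null sets, you should pass to the window part $W'$ of the semi-ring decomposition of $W \cap \bigcup_N V_N$ and exhaust by $V_N \cap W'$ (similarly, take the window part of $W \setminus W''$ in your last step, since a set difference of windows need not be a window). This is the same null-set issue the paper's own proof elides (e.g.\ its claim that $W \setminus \bigcup_{i \in N} W_i \in \W$), so it does not affect the substance of either argument.
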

\begin{proof}
By \cref{lem:extension}, it suffices to show that $\PT$ is $\sigma$-additive. 
Let $W_1,W_2,\dots \in \W \cup \Null$ with $\cup_{i} W_i \in \W \cup \Null$. Set $N = \{i \mid W_i \in \Null\}$ then it holds
\begin{align*}
    \PT\left(\bigcup_{i = 1}^\infty W_i \right) 
    &= \PT\left(\left(\bigcup_{i \in N} W_i\right) \cup \left(\bigcup_{i \not\in N} W_i\right)\right) 
    \\&= \PT\left(\bigcup_{i \in N} W_i\right) + \PT\left(\bigcup_{i \not\in N} W_i\right)
    \\&= \PT\left(\bigcup_{i \not\in N} W_i\right)
\end{align*}
where the last equality follows because $\Null$ is a $\sigma$-ideal and thus $\bigcup_{i \in N} W_i \in \Null$. Furthermore, because $\bigcup_{i \not\in N} W_i = W \setminus \bigcup_{i \in N} W_i$ we have $\bigcup_{i \not\in N} W_i \in \W$. 

Thus, w.l.o.g. w.m.a. $W_1,W_2,\dots \in \W$. Alter the sequence at finitely many places to obtain a second sequence $V_1,V_2,\dots \in \W$ with 
\begin{align*}
    \D_{\cup_{i \in \N} W_i}(A) \neq \D_{\cup_{i \in \N} V_i}(A)
\end{align*}
for some $A$.

To see that this can be done proceed as follows: 
As $\D_W$ is not constant there are $U_1,U_2 \in \W$ with $\D_{U_1} \neq \D_{U_2}$. As $U_1\setminus U_2, U_2 \setminus U_1, U_1 \cap U_2, \T \setminus (U_1 \cup U_2)$ is a finite cover of $\T$ we may assume that every $W_i$ is contained in one of them as $\PT$ is additive. Now, extend (if necessary) $W_i$ to $\tilde{W}_i$ so that we can write $U_1$ and $U_2$ as disjoint unions of sets in $\tilde{W}_i$. Hence, as $W \mapsto \D_W$ is continuous, there are $j,k$ such that $\D_{\tilde W_j} \neq \D_{\tilde W_k}$ and w.l.o.g. $\D_{\tilde W_j} \neq \D_{\cup_{i \in \N}\tilde W_i}$ and therefore $\D_{\tilde W_j} \neq \D_{\cup_{i \neq j} \tilde W_i}$ as the coefficients $\lambda_i$ in \cref{def:WDS} are strictly larger 0. Hence, $\D_{\cup_{i \in \N} W_i}$ must either differ from $\D_{\cup_{i \in \N} \tilde W_i}$ or $\D_{\cup_{i \neq j} \tilde W_i}$ and both differ from $W_i$ in finitely many places only. Define $V_i$ accordingly. 

As $V_i$ and $W_i$ differ in finitely many places only we have
\begin{align*}
    \PT\left(\bigcup_{i = 1}^\infty V_i\right) = \sum_{i = 1}^\infty \PT(V_i) \qquad\Leftrightarrow&\qquad \PT\left(\bigcup_{i = 1}^\infty W_i\right) = \sum_{i = 1}^\infty \PT(W_i).
\end{align*}
Therefore assume for the sake of contradiction that neither
\begin{align*}
    \lambda_k := \frac{\PT\left(\bigcup_{i = 1}^k W_i\right)}{\PT\left(\bigcup_{i = 1}^\infty W_i\right)} \qquad\qquad \mu_k := \frac{\PT\left(\bigcup_{i = 1}^k V_i\right)}{\PT\left(\bigcup_{i = 1}^\infty V_i\right)}
\end{align*}
converge to 1, i.e., as $0 \leq \lambda_k \leq \lambda_{k+1} \leq 1$ the limit $\lambda_\infty = \lim_{k \to \infty} \lambda_k$ exists and we assume $\lambda_\infty < 1$ and analogous for $\mu_k,\mu_\infty$. It holds
\begin{align}
    \D_{\cup_{i = 1}^\infty W_i}(A) &= \lambda_k \D_{\cup_{i = 1}^k W_i}(A) + (1-\lambda_k) \D_{\cup_{i = k+1}^\infty W_i}(A). \label{eq:appendix:lemma_eq}
\end{align}
By condition 3. of \cref{def:WDS} $\lim_{k \to \infty} \D_{\cup_{i = 1}^k W_i}(A) = \D_{\cup_{i = 1}^\infty W_i}(A)$ and thus
\begin{align*}
    a_k := \frac{\D_{\cup_{i = 1}^\infty W_i}(A) - \lambda_k \D_{\cup_{i = 1}^k W_i}(A)}{1-\lambda_k} = \D_{\cup_{i = k+1}^\infty W_i}(A)
\end{align*}
is well-defined and the limit $k\to\infty$ exists. Rewriting \cref{eq:appendix:lemma_eq} using $a_k$ and taking limits we see that
\begin{align*}
    \lim_{k \to \infty}\D_{\cup_{i = 1}^\infty W_i}(A) 
    &= \lim_{k \to \infty}\left(\lambda_k \D_{\cup_{i = 1}^k W_i}(A) + (1-\lambda_k) a_k\right)
    \\&= \lambda_\infty \D_{\cup_{i = 1}^\infty W_i}(A) + (1-\lambda_\infty) \lim_{k \to \infty} a_k
    \\\Rightarrow (1-\lambda_\infty)\D_{\cup_{i = 1}^\infty W_i}(A) &= (1-\lambda_\infty)\lim_{k \to \infty} a_k
    \\\Rightarrow \D_{\cup_{i = 1}^\infty W_i}(A) &= \lim_{k \to \infty} a_k
\end{align*}

Performing the same argument based on $V_i$ we see
\begin{align*}
    \D_{\cup_{i = 1}^\infty W_i}(A) 
    &= \lim_{k \to \infty} \D_{\cup_{i = k+1}^\infty W_i}(A)
    \\&\overset{!}{=} \lim_{k \to \infty} \D_{\cup_{i = k+1}^\infty V_i}(A)
    \\&= \D_{\cup_{i = 1}^\infty V_i}(A) 
\end{align*}
where $!$ holds because $W_i$ and $V_i$ differ at finitely many positions only so that for $k \gg 0$
\begin{align*}
         \D_{\cup_{i = k+1}^\infty W_i}(A) 
      &= \lim_{n \to \infty}\D_{\cup_{i = k+1}^n W_i}(A) 
    \\&= \lim_{n \to \infty}\D_{\cup_{i = k+1}^n V_i}(A)
    \\&= \D_{\cup_{i = k+1}^\infty V_i}(A).
\end{align*}
But this is a contradiction.
Thus $\PT$ is a finite pre-measure and thus uniquely extends to $\Sigma_\T$, by \cref{lem:extension}.
\end{proof}

The actual proof is now straightforward:
\begin{proof}[\cref{prop:exist_PT}]
Let $W_1,W_2 \in \W$ with $\D_{W_1} \neq \D_{W_2}$. By \cref{lem:prop_exist:help1}, for any finite set $\{W_1,W_2\} \subset \W_0 \subset \W$ there is a unique measure $\mu_{\W_0}$ that is compatible with $(\W_0,\Null, \D_W)$ and $\mu_{\W_0}(W_1) = 1$.

Define $\PT(S) = \mu_{\{W_1,W_2,S\}}$ for $S \in \W \cup \Null$. For $S_1,\dots,S_n \in \W \cup \Null$ disjoint with $S_1 \cup \dots \cup S_n \in \W \cup \Null$ we have that
\begin{align*}
    \PT\left(\bigcup_{i = 1}^n S_i\right)
    &= \mu_{\{W_1,W_2,S_1 \cup \dots \cup S_n\}}\left(\bigcup_{i = 1}^n S_i\right)
    \\&\overset{!}{=} \mu_{\{W_1,W_2,S_1, \dots, S_n\}}\left(\bigcup_{i = 1}^n S_i\right)
    \\&= \sum_{i = 1}^n\mu_{\{W_1,W_2,S_1, \dots, S_n\}}\left(S_i\right)
    \\&\overset{!}{=} \sum_{i = 1}^n\mu_{\{W_1,W_2,S_i\}}\left(S_i\right)
    \\&= \sum_{i = 1}^n\PT\left(S_i\right),
\end{align*}
where $!$ holds because the construction in \cref{lem:prop_exist:help1} is compatible with restrictions.
Thus, $\PT$ is a content on $\W \cup \Null$ that is compatible with $\D_W$ which by \cref{lem:prop_exist:help2} uniquely extends to a measure. 
\end{proof}

\subsection{Proof of \cref{cor:premain} and \cref{thm:main}}
Both statements can be seen as direct consequences of \cref{prop:premain,prop:exist_PT}.

\begin{proof}[\cref{cor:premain}]
As the time distribution obtained in \cref{prop:exist_PT} is unique it follows that $R(i(\D))(S) = \D(\X \times S)$. 
\end{proof}

\begin{proof}[\cref{thm:main}]
As $\X$ and $\T$ are standard Borel spaces, every probability measure can uniquely be decomposed into a marginal distribution and a Markov kernel. Thus, $\holist$ is a bijection. The statement follows by concatenating \cref{prop:premain} and \cref{cor:premain} with this map.
\end{proof}
}

\end{document}